\documentclass[conference]{IEEEtran}
\IEEEoverridecommandlockouts

\usepackage[T1]{fontenc}
\usepackage{cite}
\usepackage{amsmath,amssymb,amsfonts}
\usepackage{amsthm}
\usepackage{algorithmic}
\usepackage{graphicx}
\usepackage{textcomp}
\usepackage{xcolor}
\usepackage[hidelinks]{hyperref}
\usepackage{booktabs}
\usepackage{microtype}
\usepackage{enumitem}

\graphicspath{{figures/}{../figures/out/}{../figures/}}

\newtheorem{definition}{Definition}
\newtheorem{proposition}{Proposition}
\newtheorem{corollary}{Corollary}
\newtheorem{remark}{Remark}

\newcommand{\R}{\mathbb{R}}
\newcommand{\D}{\mathcal{D}}
\newcommand{\relu}{\rho}
\newcommand{\eps}{\varepsilon}
\newcommand{\epsstar}{\eps^{\!*}}
\newcommand{\ablate}{\mathrm{ablate}}
\newcommand{\vol}{\mathrm{vol}}

\begin{document}

\title{Certified Mechanistic Edits: Behavioral Guarantees for Skill Removal and Preservation}

\author{\IEEEauthorblockN{Md Sazid Uddin, Md.\ Khairul Alam Mazumder, M.\ F.\ Mridha}
	\IEEEauthorblockA{Department of Computer Science\\
		American International University-Bangladesh (AIUB), Dhaka, Bangladesh\\
		\{sazid.uddin, khairul, firoz.mridha\}@aiub.edu}}

\maketitle

\begin{abstract}
	Mechanistic edits (ablations, weight edits, activation steering) are the standard tools for unlearning a harmful capability from a neural network while preserving useful ones. Current approaches validate their effects only by testing, which can never cover an entire continuous region of inputs. Prior work at the interpretability–verification boundary certifies descriptions of a model: what a circuit computes, or whether it faithfully explains the whole. We instead certify the behavioral effect of an edit: that disabling a circuit removes one skill and provably preserves another, for every input in a region; a feature non-interference guarantee in the information-flow-security sense. We demonstrate such certified edits from toy ReLU networks up to a standard softmax + LayerNorm transformer, proving removal and preservation over continuous embedding-space regions and reaching roughly 9× the input-perturbation dimension an exact solver can handle by switching to sound bound propagation. Furthermore, we prove that no finite deterministic black-box test can certify removal, exhibiting an edit that passes exhaustive testing yet provably fails on a survivor pocket that can be made arbitrarily small. Guarantees hold on small, standard-architecture networks and, like any removal claim, presuppose that the target skill admits a decidable specification, a property which real-world harms may not have.
\end{abstract}

\begin{IEEEkeywords}
	formal verification, mechanistic interpretability, machine unlearning, model editing, non-interference, certified guarantees
\end{IEEEkeywords}

\section{Introduction}
\label{sec:intro}

Ablating a circuit, zeroing a weight, clamping a sparse-autoencoder feature, or adding a representation-steering vector now form the standard mechanistic interpretability toolkit for making models safer. These help with unlearning a dangerous capability, suppressing a behavior, or enforcing a refusal. In order to test whether an edit has the desired effect, a mechanistic practitioner runs the edited model on a held-out test set of prompts, checks that the target skill or behavior is gone, and that unrelated skills survive, and ships the edit if the test results are satisfactory.

This method of testing answers the question only at the inputs tested. This method of testing answers the question only at the inputs tested. Jailbreaks succeed precisely on inputs outside the distribution that safety training and evaluation covered~\cite{wei-jailbroken,zou-gcg}, and unlearning edits that pass their benchmark evaluations are reversed by such inputs~\cite{lucki-unlearning}. A guarantee that holds for every input in a continuous region would be strictly stronger as a guarantee, and it is exactly what a test set cannot provide. This paper supplies that guarantee. We certify, by formal proof, the \emph{behavioral effect of an edit}: that after the edit a specified skill is absent, and another is preserved, for \emph{all} inputs in a region of the model's input representation.

A growing body of work brings formal verification to interpretability, but it certifies \emph{descriptions} of a model: an accuracy floor for the unedited network~\cite{compact-proofs}, whether a discovered circuit faithfully and minimally explains the whole~\cite{hadad}, what a circuit computes~\cite{somani}, or whether a circuit is stable under dataset resampling~\cite{certified-circuits}. A separate line of work certifies unlearning statistically, indistinguishability from a model retrained without the deleted data~\cite{certified-unlearning}. However, these certificates do not specify how a given edit affects the model's behavior over a region of inputs. This area of experimentation has been unexplored, and this paper aims to fill this gap. In security terms, our guarantee is a \emph{feature non-interference} property: the edited component's output is provably independent of the forbidden input feature, everywhere in the region.

The stronger claim is not merely undertested by finite test suites; it is unreachable by them. We prove (Proposition~\ref{prop:p3}) that no finite deterministic black-box test (adaptive queries included) can certify removal of a target skill. The witness is concrete: for an edit that passes an exhaustive grid of tests, our solver exhibits a surviving input where the skill is intact (Fig.~\ref{fig:illusion}). Ordinary training produces such ``intervention illusions'' unprompted, in models as small as four inputs, with survivor pockets below $1.5\times10^{-6}$ of the domain. A diligent tester can therefore ship an edit that leaves the skill alive in a vanishing-but-nonzero pocket, unknowingly.
\subsection*{Contributions of this paper:}
\begin{enumerate}
	\item We formalize a certified edit and its certified radius, prove they compose and are cheaply computable (P1--P2), and prove that no finite deterministic black-box test can certify removal (P3), with a constructive witness that our code executes. A companion result (P4) separates edit classes: surgery vs.\ steering, exactly, with a certified collateral law.
	\item An exact-rational encoding into an SMT solver certifies removal and preservation over all token sequences $\times$ a continuous embedding ball, on piecewise-linear models up to a threshold-gate transformer, together with a map of exactly how far exact proofs reach (Table~\ref{tab:frontier}).
	\item A bound-propagation extension certifies the same kind of edit on a standard softmax\,+\,LayerNorm transformer that the exact solver cannot encode due to its non-linearity. The extension reaches roughly $9\times$ the perturbation dimension of the exact frontier, validated to \emph{equal} the exact radius where both apply and bracketed above by attack.
\end{enumerate}

\subsection*{Scope of the guarantees:}
Our guarantees hold on small, standard-architecture networks; the skills we remove/preserve have a decidable ground-truth specification, which is what makes ``removed'' a deterministic claim, a property that real harmful capabilities often lack. The demonstration is a proof of concept, motivated by the need for rigorous safety guarantees in AI systems. \S\ref{sec:limitations} states the scope in full, and \S\ref{sec:conclusion} names the follow-up of this work.

\begin{figure*}[t]
	\centering
	\includegraphics[width=\textwidth]{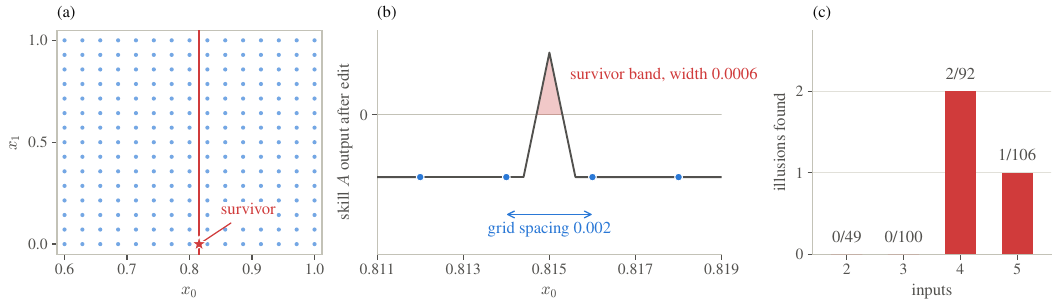}
	\caption{An edit that every test approves and the solver refutes. (a)~Skill~$A$'s removal region for a constructed two-input model after an ablation edit. A $15\times15$ grid (shown), a $201\times201$ grid of $40{,}401$ points and $300$ random inputs all report the skill removed while the solver nonetheless returns a surviving input (star). (b)~The edited skill-$A$ output along $x_0$ through the survivor: a tent of three ReLUs rises above zero in a band $0.0006$ wide, between neighboring grid points $0.002$ apart (Proposition~\ref{prop:p3}). (c)~An automated search over trained models, $80$ per input dimension, finds no illusion with two or three inputs and three with four or five; their survivor regions lie below $1.5\times10^{-6}$ of the domain (95\% Clopper--Pearson bound).}
	\label{fig:illusion}
\end{figure*}

\section{Related Work: the description axis vs.\ the edit axis}
\label{sec:related}

We organize the relevant literature by the object each method certifies. Prior formal work certifies objects such as the description of a fixed model; a statistical line certifies an intervention's average effect; we certify the worst-case behavioral effect of an edit over a continuous region. Table~\ref{tab:positioning} summarizes the axis.

\paragraph{Certified interpretability (the description axis)}
Compact proofs \cite{compact-proofs} certify an accuracy bound of the unedited model. \emph{Formal mechanistic interpretability}~\cite{hadad} certifies that a discovered circuit is faithful and minimal; its Definition~2 quantifies over continuous activation-patch values at finitely many reference inputs, but constrains the agreement with the unedited model, not the task behavior of an edit. Certified Circuits~\cite{certified-circuits} certifies the stability of a discovered circuit under dataset resampling. The nearest neighbor, Verifiable Transformers~\cite{somani}, uses an SMT solver to certify four descriptive properties of a circuit explanation (what it computes, per-input edge necessity, robustness of the final residual); it explicitly disclaims statements about an edit's behavioral effect. Its ``model editing'' is surgery performed to make the model verifiable by stripping LayerNorm, replacing heads with restricted programs, then freezing and hashing. It is however not a claim about what an edit does. These methods answer ``is this explanation of the model faithful?''; we answer ``what happens when you act on it?''

\paragraph{Certified edits of a different kind}
\emph{BlockCert}~\cite{blockcert} attaches certificates to weight edits, but the certified statement is a bound on $\lVert F'(x)-F(x)\rVert$ over a finite traced prompt set demonstrating how far outputs move, not what the behavior became. A large certified deviation is compatible with the target skill surviving, and a small one with collateral damage. Finite prompt evidence is, moreover, exactly the protocol class our Proposition~\ref{prop:p3} proves cannot certify removal. We view BlockCert as complementary infrastructure for a different question to our own. Certified unlearning~\cite{certified-unlearning} gives a statistical, training-data-level guarantee that the unlearned model is distributionally indistinguishable from one retrained without the deleted data. It certifies a \emph{process} relative to a counterfactual model, not the \emph{behavior} of the result; the two guarantees are orthogonal and composable.

\paragraph{The statistical / anytime-valid axis}
A parallel line certifies interventional claims \emph{statistically}, over a distribution rather than a worst-case region. Certified interventional fidelity (CIF)~\cite{cif} writes the reported quantity as a causal estimate and gives anytime-valid confidence sequences robust to adaptive intervention sampling. Statistical unlearning of distributions~\cite{stat-unlearn} characterizes a removal-preservation Pareto frontier between data distributions. Both are average-case; neither certifies that a specified behavior is absent for every input in a continuous region of one fixed model. CIF is, in fact, precisely the randomized-adaptive-tester case that Proposition~\ref{prop:p3} explicitly declines to cover. The two are complementary halves of one worry, and we distinguish worst-case/region/exact from average-case/distribution/anytime-valid explicitly.

\paragraph{Black-box impossibility neighbors}
Two methodologically relevant recent results deal with proving impossibility theorems of black-box models. Hair-Trigger Alignment~\cite{hair-trigger} proves that static black-box probing cannot distinguish a genuinely robust model from one hiding adversarial behavior that a single benign fine-tuning update re-activates. This is a statement about a future weight update, not about whether a test certifies that a fixed edit removed a skill over a region. Unlearning as distribution restoration~\cite{yang-yeung} gives an empirical finite-query impossibility for oracle-free unlearning certification; P3 sharpens the deterministic-adaptive case it leaves open. An independent, non-interpretability instance of the same epistemic point, verification over a continuous region catching failures that testing \emph{between} cases misses appears in autonomous-vehicle control~\cite{testing-between}.

\paragraph{Non-interference as the security frame}
We state removal as \emph{feature non-interference}: the output is provably independent of a forbidden input feature. The vocabulary is borrowed from LLM security. GIF~\cite{gif} certifies geometric information-flow control for LLM agents (zero flow of untrusted spans to sensitive actions) but is applied there to data flow through an agent. To our knowledge we are the first to certify non-interference as the effect of a mechanistic edit.

\paragraph{Editing practice and its pitfalls}
The steering and unlearning literature ships interventions with empirical validation only~\cite{repe}, and safety analyses document their collateral damage: even random-direction steering breaks safety~\cite{rogue-scalpel}, and refusal ablation has statistically significant, sometimes sign-reversing off-target effects at frontier scale~\cite{abliteration,steering-pitfalls}. Our Proposition~\ref{prop:p4} gives, to our knowledge, the first exact account of that trade-off for a model class, and \S\ref{sec:findings} turns it into certified radii. That steering can be compiled into weight changes is established practice~\cite{nullspace-steering}. We use the same observation so that one prover covers every edit type.

\subsection*{Research gaps}
The literature leaves three gaps that we address (Table~\ref{tab:positioning}).
\begin{itemize}
	\item First, formal certificates in interpretability describe a fixed model: its accuracy, its circuits, or the stability of those circuits. The only certificate attached to an edit, BlockCert's, bounds output deviation on a finite prompt set. No prior work certifies the \emph{behavioral effect} of a mechanistic edit, removal of one skill and preservation of another, for every input in a continuous region (\S\ref{sec:method}, \S\ref{sec:findings}).
	\item Second, existing impossibility results concern future weight updates or oracle-free empirical certification. None establishes that finite deterministic black-box testing, adaptive or not, cannot certify removal (Proposition~\ref{prop:p3}).
	\item Third, the collateral damage of steering is documented empirically but has no exact account that separates surgical edits from steering (Proposition~\ref{prop:p4}).
\end{itemize}
We address all three gaps using the framework of \S\ref{sec:framework}: certificates of removal and preservation over continuous input regions, proved exactly by an SMT solver and extended by sound bound propagation where exact encoding stops (\S\ref{sec:method}), together with the impossibility and edit-separation results of Propositions~\ref{prop:p3} and~\ref{prop:p4}.

\begin{table*}[t]
	\centering
	\caption{Certification Object and Quantifier Domain comparison of related work with our own}
	\label{tab:positioning}
	\small
	\begin{tabular}{@{}p{0.24\textwidth}p{0.26\textwidth}p{0.24\textwidth}p{0.16\textwidth}@{}}
		\toprule
		\textbf{Work}                                    & \textbf{Certified object}                                     & \textbf{Quantifier domain}                               &
		\textbf{Edit's behavioral effect?}                                                                                                                                                                         \\
		\toprule
		Compact proofs~\cite{compact-proofs}             & accuracy bound of the unedited model                          & all inputs (small domains)                               & no                           \\
		\midrule
		Formal Mech. Interp.~\cite{hadad}                & circuit faithfulness under patching                           & continuous activation-patch values, finitely many inputs & no                           \\
		\midrule
		Certified circuits~\cite{certified-circuits}     & stability of a discovered circuit                             & dataset resamplings                                      & no                           \\
		\midrule
		Verifiable Transformers~\cite{somani}            & four properties of a circuit explanation                      & final-residual perturbations; per-input edge necessity   & no (disclaimed in~\cite{somani}, \S1.4)\\
		\midrule
		BlockCert~\cite{blockcert}                       & extraction fidelity; output-deviation bound                   & finite traced prompt set                                 & partial (how much, not what) \\
		\midrule
		Certified unlearning~\cite{certified-unlearning} & process indistinguishability                                  & distribution over models                                 & no (statistical)             \\
		\midrule
		Certified interventional fidelity~\cite{cif}     & statistical intervention effect                               & expectation over an input+intervention distribution      & no (average-case)            \\
		\midrule
		\textbf{This paper}                              & \textbf{removal + preservation + certified radius of an edit} & \textbf{continuous input regions and their inflations}   & \textbf{yes}                 \\
		\bottomrule
	\end{tabular}
\end{table*}

\section{Framework: certified edits and the impossibility of testing}
\label{sec:framework}

\subsection{Definitions}

\begin{definition}[model class]
	\label{def:model}
	\normalfont
	A \emph{network} is a function $f:\R^d\to\R^k$ of the form $f(x)=W_2\,\relu(W_1x+b_1)+b_2$, where $\relu$ is the componentwise ReLU, the hidden layer has \emph{width} $H$ (so $W_1\in\R^{H\times d}$, $b_1\in\R^{H}$, $W_2\in\R^{k\times H}$), and all weights are rational (every IEEE float is a rational, and the prover hands the solver the exact rationals). We write $f_h$ for output coordinate (``head'') $h\in\{1,\dots,k\}$ and, for each hidden neuron $j\in\{1,\dots,H\}$, the pre-activation of neuron $j$ is $\mathrm{pre}_j(x)=(W_1x+b_1)_j$. Every such $f$ is continuous and piecewise-linear; the input domain is $\D=[0,1]^d$.
\end{definition}

\begin{definition}[claim, certificate]
	\label{def:claim}
	\normalfont
	A \emph{claim} is a triplet of shape $(h,R,\rhd)$ with $R\subseteq\D$ a closed box and $\rhd\in\{>,\le\}$; it \emph{holds for} $g$ iff $\forall x\in R:\;g_h(x)\rhd 0$.  A \emph{certificate} is a proof of this statement, obtained by showing its negation $\exists x\in R:\lnot(g_h(x)\rhd 0)$ is unsatisfiable. \emph{Removal} of skill $A$ over $R_A$ is the claim $(A,R_A,\le)$ about the edited model (the unedited model satisfies $(A,R_A,>)$); \emph{preservation} of skill $B$ is a pair of claims on $B$'s regions.
\end{definition}

\begin{definition}[edit]
	\label{def:edit}
	\normalfont
	An \emph{edit} maps a network's weights to new weights of the same shapes. For a neuron set $C\subseteq\{1,\dots,H\}$ (the circuit), edit types are
	defined as follows:
	\begin{enumerate}
		\item \emph{Ablation}: For each neuron $j\in C$, zero the $j$th row of $W_1$ and the $j$th entry of $b_1$; this removes neuron $j$ from the hidden layer, so that $\mathrm{pre}_j(x)=0$.
		\item \emph{Weight edit}: For head $h$ and neuron $j$, zero the weight $W_2[h,j]$; this removes the contribution of neuron $j$ to head $h$.
		\item \emph{Steering}: For a steering vector $v\in\R^H$ (one entry per hidden neuron), add it to the hidden bias, $b_1\leftarrow b_1+v$; this shifts every neuron's pre-activation by a constant on all inputs, $\mathrm{pre}_j(x)\to \mathrm{pre}_j(x)+v_j$. The special case $v_j=-s$ for $j\in C$ and $v_j=0$ otherwise is \emph{targeted suppression at dose $s$ on $C$}: it pushes down exactly the neurons in $C$, by an amount $s$.
	\end{enumerate}
\end{definition}


\begin{definition}[inflation, certified radius]
	\label{def:radius}
	\normalfont
	For a region $R$ and $\eps\ge0$, the \emph{inflated region} is $R_\eps=\prod_i[\max(0,l_i-\eps),\min(1,u_i+\eps)]$ (so $R_0=R$, $R_\eps\subseteq\D$). Given a claim holding on $R$ for $g$, its \emph{certified radius} is $\epsstar=\sup\{\eps\in[0,\eps_{\max}]:(h,R_\eps,\rhd)\text{ holds for }g\}$, with $\eps_{\max}$ a fixed upper bound on the inflation considered, chosen large enough that $R_{\eps_{\max}}=\D$.
\end{definition}

\begin{remark}[the certified object and the float gap]
	\normalfont
	Certificates are statements about the exact rational network $f$ under real arithmetic. The executed program evaluates the same weights in float64 and can differ pointwise by rounding, bounded by a standard forward-error analysis ($\approx10^{-13}$ for our models). A certificate proved with slack $\gamma$ above that bound holds for the executed program too; a zero-margin claim need not transfer. The toy model's removal, preservation and control certificates are re-proved with slack $\gamma=10^{-9}$, and every threshold-gate transformer certificate carries slack $10^{-6}$ (\S\ref{sec:method}, App.~\ref{app:floatgap}).
\end{remark}

\subsection{Regions compose and the radius is computable (P1--P2)}

\begin{proposition}[region monotonicity and union closure]
	\label{prop:p1}
	\normalfont
	Given a model $g$, a head $h$, and sign $\rhd$,
	\begin{enumerate}[label=\textbf{(\alph*)}]
		\item If $(h,R,\rhd)$ holds and $R'\subseteq R$, then $(h,R',\rhd)$ holds.
		\item If $(h,R_i,\rhd)$ holds for every $i$ in any index set, then $(h,\bigcup_i R_i,\rhd)$ holds.
	\end{enumerate}
	The family of certified regions is thus downward closed and closed under arbitrary unions, with maximal element the truth set $\mathcal{T}=\{x\in\D:g_h(x)\rhd 0\}$: a region is certifiable iff it is a subset of $\mathcal{T}$.
\end{proposition}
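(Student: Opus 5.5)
The plan is to unfold Definition~\ref{def:claim} and reduce everything to elementary facts about universal quantification over sets. A claim $(h,R,\rhd)$ holds for $g$ exactly when $R\subseteq\mathcal{T}$, where $\mathcal{T}=\{x\in\D:g_h(x)\rhd 0\}$. This holds because ``$\forall x\in R:\,g_h(x)\rhd 0$'' says that every point of $R$ lies in $\mathcal{T}$, and $R\subseteq\D$ by definition. I will state this reformulation first as the key lemma, since both parts and the closing characterization follow from it directly.

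For \textbf{(a)}, given $R'\subseteq R\subseteq\mathcal{T}$, transitivity of inclusion gives $R'\subseteq\mathcal{T}$. For \textbf{(b)}, if $R_i\subseteq\mathcal{T}$ for every $i$, then any $x\in\bigcup_i R_i$ lies in some $R_i$ and hence in $\mathcal{T}$. No finiteness of the index set is used, and no property of $g$ (continuity, piecewise-linearity, rationality) enters. The empty index set gives the empty region, which holds vacuously.

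The one point needing care is that Definition~\ref{def:claim} requires $R$ to be a closed box. Arbitrary unions of boxes, and $\mathcal{T}$ itself, need not be boxes: for $\rhd={>}$, $\mathcal{T}$ is relatively open in $\D$, since $g_h$ is continuous by Definition~\ref{def:model}. I will therefore read ``claim'' in (b) and in the final sentence as the same universally quantified statement extended to arbitrary subsets $S\subseteq\D$. On this reading the lemma applies verbatim. Downward closure is (a), union closure is (b), and the family of certified regions is exactly the power set of $\mathcal{T}$. Its maximal element is $\mathcal{T}$, which holds trivially by its own definition, and ``$S$ is certifiable iff $S\subseteq\mathcal{T}$'' is the lemma itself.

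If one insists on boxes only, I would add a remark. Downward closure still holds among boxes, and the union statement holds at the level of the quantified property. Maximality then means that every certifiable box is contained in $\mathcal{T}$, and every box contained in $\mathcal{T}$ is certifiable. The ``iff'' in that case rests on soundness and completeness of the unsatisfiability check: the negation $\exists x\in R:\lnot(g_h(x)\rhd 0)$ is unsatisfiable exactly when $R\subseteq\mathcal{T}$. For the exact rational piecewise-linear encoding this is decidable in linear real arithmetic. This definitional bookkeeping is the only real obstacle; the mathematics is immediate.
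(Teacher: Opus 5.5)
Your proposal is correct and is essentially the paper's proof: both reduce (a), (b) and the closing characterization to the meaning of the universal quantifier, i.e.\ that a claim holds exactly when its region lies in $\mathcal{T}$. The only cosmetic difference is that the paper identifies $\mathcal{T}$ as the union of all certified regions via the certified singletons $\{x\}\subseteq\mathcal{T}$ (degenerate closed boxes), while you check directly that $\mathcal{T}$ satisfies the quantified statement. Your explicit extension of ``claim'' to non-box sets makes precise something the paper uses implicitly when it calls that union certified.
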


Both parts are immediate from the meaning of the universal quantifier (proof in App.~\ref{app:proofs}). Proposition 1 licenses reporting a maximal certified region and gluing per-patch certificates without re-proving, and underlies the monotonicity of the radius in $\eps$; it also lets removal and preservation be certified independently and combined.

\begin{proposition}[the certified radius is well-defined and cheap]
	\label{prop:p2}
	\normalfont
	Let $g$ be a network, $R$ a box, and suppose $(h,R,\rhd)$ holds. Let $S=\{\eps\in[0,\eps_{\max}]:(h,R_\eps,\rhd)\text{ holds}\}$, $\epsstar=\sup S$.  Then:
	\begin{enumerate}[label=\textbf{(\alph*)}]
		\item $S$ is an interval containing $0$;
		\item if $\rhd$ is $\le$, then the supremum is \emph{attained} ($\epsstar\in S$); while for $>$, only $[0,\epsstar)\subseteq S$ is guaranteed;
		\item for each rational $\eps$, membership $\eps\in S$ is decided exactly by the (un)satisfiability of a quantifier-free linear-real-arithmetic formula, for which the solver is sound and complete;
		\item bisection returns, in at most $2+\lceil\log_2(\eps_{\max}/\tau)\rceil$ queries, either a counterexample, ``saturated'', or a bracket $[\ell,\mathrm{hi}]$ with $\ell\in S$, $\mathrm{hi}\notin S$, $\mathrm{hi}-\ell\le\tau$, hence $|\ell-\epsstar|\le\tau$.
	\end{enumerate}
\end{proposition}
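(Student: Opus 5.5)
The plan is to reduce everything to two structural facts about the inflation map $\eps\mapsto R_\eps$ and then read off (a)--(d). \emph{Fact 1 (monotone nesting):} if $\eps\le\eps'$ then $R_\eps\subseteq R_{\eps'}$. This holds coordinatewise, since $\max(0,l_i-\eps)\ge\max(0,l_i-\eps')$ and $\min(1,u_i+\eps)\le\min(1,u_i+\eps')$. \emph{Fact 2 (closedness from above):} for $\eps_n\downarrow\eps$ we have $\bigcap_n R_{\eps_n}=R_\eps$, because each endpoint is a continuous function of $\eps$.

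\textbf{(a)} Since $(h,R,\rhd)$ holds and $R_0=R$, we get $0\in S$. If $\eps'\in S$ and $0\le\eps\le\eps'$, then Fact~1 together with Proposition~\ref{prop:p1}(a) gives $\eps\in S$. So $S$ is downward closed in $[0,\eps_{\max}]$, hence an interval $[0,\epsstar)$ or $[0,\epsstar]$.

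\textbf{(b)} For $\le$, the truth set $\mathcal T=\{x\in\D:g_h(x)\le0\}$ is closed, since $g$ is continuous (Definition~\ref{def:model}). Take $\eps_n\in S$ with $\eps_n\uparrow\epsstar$; the case $\epsstar=0$ is trivial. Every point $x\in R_{\epsstar}$ is a limit of points in $R_{\eps_n}$: clamp each coordinate into $[\max(0,l_i-\eps_n),\min(1,u_i+\eps_n)]$, and these clamped points converge to $x$ by continuity of the endpoints. Those points lie in $\mathcal T$, so $x\in\mathcal T$ by closedness, and $\epsstar\in S$ by Proposition~\ref{prop:p1}. For $>$, the truth set is open, so this argument fails. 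The statement only claims $[0,\epsstar)\subseteq S$, which already follows from (a) and the definition of sup. If a witness is wanted, a function that is positive on an open box and zero on its boundary shows attainment can fail.

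\textbf{(c)} For rational $\eps$, $R_\eps$ is a box with rational bounds. The negation $\exists x\in R_\eps:\lnot(g_h(x)\rhd0)$ is encoded in the standard way. Introduce variables $x$, $z_j=\mathrm{pre}_j(x)$ and $a_j$, and encode each ReLU as the disjunction $(z_j\ge0\wedge a_j=z_j)\vee(z_j<0\wedge a_j=0)$. Add the linear output $g_h=W_2[h,:]a+b_2[h]$ and the negated comparison, and conjoin the box constraints. All coefficients are rational, so this is a quantifier-free LRA formula equisatisfiable with the negated claim. LRA is decidable, and the solver is sound and complete for it (e.g.\ simplex-based DPLL(T)). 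Hence unsat iff $\eps\in S$.

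\textbf{(d)} This is the step needing care with bookkeeping rather than ideas. The algorithm spends one query at $\eps=0$; sat there gives a counterexample, though by hypothesis that branch cannot occur. It spends one query at $\eps_{\max}$; unsat there means ``saturated'', with $\epsstar=\eps_{\max}$. Otherwise it maintains the invariant $\ell\in S$, $\mathrm{hi}\notin S$, and bisects at rational midpoints, which is valid by (c). The interval structure from (a) guarantees that each query preserves the invariant. The gap halves every step, so $\lceil\log_2(\eps_{\max}/\tau)\rceil$ steps reach $\mathrm{hi}-\ell\le\tau$. The invariant and (a) put $\epsstar\in[\ell,\mathrm{hi}]$, giving $|\ell-\epsstar|\le\tau$. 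The main obstacle I expect is (b): stating the limiting argument on inflated boxes cleanly near the clamping at $0$ and $1$.
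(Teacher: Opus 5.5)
Your proposal is correct and follows the paper's proof essentially step for step: downward closure via the nesting of $R_\eps$ and Proposition~\ref{prop:p1}(a), the coordinatewise-clamp limit argument for attainment under $\le$ (your closed-truth-set phrasing is the paper's continuity step), elimination of each ReLU into a two-case linear disjunction to get a QF-LRA formula, and the $2+\lceil\log_2(\eps_{\max}/\tau)\rceil$ bisection count with the invariant $\ell\in S$, $\mathrm{hi}\notin S$. The only differences are cosmetic: your Fact~2 is never actually used, and for the strict sign the paper makes your witness concrete as $d=1$, $g(x)=\relu(0.5-x)$ on $R=[0,0.2]$, where $S=[0,0.3)$ and $x=0.5$ violates the claim at $\eps=0.3$.
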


The attainment asymmetry in (b) matters for reporting: \emph{removal} radii (sign $\le$) are genuine maxima, whereas \emph{preservation} radii (sign $>$) are suprema our procedure brackets within $\tau$. The bracket guarantee assumes every query is answered; the code treats a solver timeout conservatively as a failed probe, which preserves the soundness of $\ell\in S$ but voids the bracket (no query returned ``unknown'' in any experiment reported here). Full proof in App.~\ref{app:proofs}.

\subsection{No finite test certifies removal (P3)}

This is the central negative result: the intervention illusion is not an artifact of particular models, but a theorem of the framework itself.

\begin{proposition}[testing cannot certify removal]
	\label{prop:p3}
	\normalfont
	Given input dimension $d\ge1$, a removal region $R=\prod_i[l_i,u_i]\subseteq\D$ with $l_0<u_0$, any finite test set $T\subset\D$, and any $\delta>0$, there exist a network $f$, a neuron set $C$, and the edited model $g=\ablate(f,C)$ such that:
	\begin{enumerate}[label=\textbf{(\alph*)}]
		\item the unedited $f$ satisfies the full skill specification;
		\item $g_A(t)\le0$ for every $t\in T\cap R$ \emph{and} both preservation claims for skill $B$ hold for $g$ on their entire regions (so even a solver checking preservation approves the edit);
		\item the survivor set $S_g=\{x\in R:\,g_A(x)>0\}$ is nonempty and open, and its relative volume satisfies $\vol(S_g)/\vol(R)<\delta$.
	\end{enumerate}
\end{proposition}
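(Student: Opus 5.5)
The plan is a direct construction: a ``tent'' bump in skill $A$'s output, placed in the gap between test points along coordinate $x_0$. Ablation kills the part of the circuit that keeps $A$ negative everywhere, and leaves the bump, which is positive only on a thin slab.

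\emph{Step 1: choose the pocket.} Project the finite set $T$ onto coordinate $0$. The projection is finite and $l_0<u_0$, so there is an open interval $(c-w,c+w)\subset(l_0,u_0)$ containing no projected test point. Shrink $w$ until the slab $(c-w,c+w)\times\prod_{i\ne0}[l_i,u_i]$ has relative volume $2w/(u_0-l_0)<\delta$. Take $c,w$ rational.

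\emph{Step 2: build the tent.} Use three ReLUs in $x_0$, $\relu(x_0-c+w)-2\relu(x_0-c)+\relu(x_0-c-w)$. This is the hat function $\phi(x_0)$, which equals $w-|x_0-c|$ on the slab and $0$ elsewhere. Set
\[
f_A(x)=\phi(x_0)-\tfrac{w}{2}+M\,\relu(1-\text{const}\cdot 0)\cdots
\]
More concretely, use two ingredients. The first is a circuit neuron $j^\ast\in C$ with $\mathrm{pre}_{j^\ast}(x)=1$ (zero weights, bias $1$) and output weight $K$ into head $A$. The second is a negative bias $b_{2,A}=-w/2$. Then $f_A=\phi-w/2+K$.

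(a) requires $f_A>0$ on $R_A$ together with whatever negative-side claims the full specification imposes. I take the specification to be the one from Definition~\ref{def:claim}: skill $A$ positive on $R_A$, skill $B$ with its own regions. Choosing $K>w/2$ makes $f_A\ge K-w/2>0$ everywhere, so the unedited claim holds.

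Head $B$ uses separate neurons not in $C$, for instance a single affine neuron realizing a linear function of the right sign on each of $B$'s regions. If $B$'s regions are arbitrary disjoint boxes, use an affine separator, or a small ReLU sum of the same tent type. Because $C=\{j^\ast\}$ does not feed $B$, ablation leaves $g_B=f_B$, and both preservation claims hold on their entire regions. This is where the hidden width is spent: three tent neurons, one circuit neuron, and $B$'s neurons. The tent neurons write $0$ into head $B$.

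\emph{Step 3: verify the edit.} After $\ablate(f,\{j^\ast\})$ we have $g_A=\phi(x_0)-w/2$, so $g_A>0$ exactly where $|x_0-c|<w/2$. That set is an open slab intersected with $R$, hence nonempty, and relatively open in $R$. The ``open'' in (c) should be read relative to $R$; if $R$ has full-dimensional interior it is open in the interior. Its relative volume is $w/(u_0-l_0)<\delta$.

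Every $t\in T\cap R$ has $|t_0-c|\ge w$, so $\phi(t_0)=0$ and $g_A(t)=-w/2\le0$, which gives (b).

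Adaptivity is harmless for a deterministic tester. Fix the tester, run it against the model $g^-$ obtained with the tent removed (so $g^-_A\equiv-w/2$), and let $T$ be the finite query sequence it produces. Then build $g$ with the pocket avoiding $T$. $g$ answers identically on $T$, so the tester makes the same queries and approves. I expect this adaptive-to-fixed reduction to be the main subtlety. The arithmetic is routine.

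The other delicate point is making (a) compatible with an arbitrary prescribed $B$ specification. If $B$'s regions overlap in a way no single affine function separates, I would fall back on the universal approximation of piecewise-linear indicators by ReLU sums on the closed boxes. This assumes the regions with opposite signs are disjoint, since otherwise the specification is unsatisfiable and (a) is vacuous to demand.
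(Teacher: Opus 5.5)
\textbf{Gap in (a).} Your choice of pocket, the tent $\phi$, and your verification of (b) and (c) after the edit coincide with the paper's construction at $\beta=1$; insisting on rational $c,w$ is a good touch. The problem is (a). You make head $A$ positive on all of $\D$ with a constant circuit neuron of weight $K>w/2$, having decided that skill $A$'s specification is only the positive claim on $R_A$. But the ``full skill specification'' is a pair of claims per skill, just as you assume for $B$: head $A>0$ on $A$'s HIGH region \emph{and} head $A\le0$ on $A$'s LOW region. The paper's proof of (a) checks the LOW half explicitly. Your $f_A\ge K-w/2>0$ violates that claim at every point of the LOW region, so (a) fails. Put differently, your unedited model is the constant classifier ``skill $A$ always fires,'' and the circuit you ablate is a disguised bias. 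That drains the proposition of its content: the point is that a model which genuinely computes $A$, edited by removing its $A$-circuit, can pass every test while a survivor pocket remains.

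\textbf{Repair.} Make $C$ a real main pathway for $A$, and check that the tent cannot lift head $A$ above $0$ on the LOW region. The paper takes the main pathway positive with margin exceeding $\beta w/2$ on HIGH and nonpositive on LOW. It argues that the bump, supported on $x_0\in(c-w,c+w)\subset(l_0,u_0)$, lies inside HIGH, so $\phi=0$ on LOW and $f_A=\mathrm{main}-w/2\le-w/2$ there. If you want arbitrary disjoint boxes for $A$, as you allow for $B$, the slab may meet LOW. Since $\phi\le w$, you then need $\mathrm{main}\le-w/2$ there. Take a strict affine separator $\ell$ with $\ell\ge m>0$ on HIGH and $\ell\le-m$ on LOW, and use two circuit neurons giving $K\relu(\ell)-K\relu(-\ell)=K\ell$ with $Km>w/2$. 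Ablating both still leaves $g_A=\phi-w/2$, so your (b) and (c) go through unchanged. The same separation theorem also makes your universal-approximation fallback for $B$ unnecessary.

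\textbf{Adaptive paragraph.} This is beyond the proposition (it is Corollary~\ref{cor:c32}), but as written it is circular. $g^-_A\equiv-w/2$ depends on $w$, yet you choose $w$ only after the tester has produced its queries against $g^-$. Instead, fix the baseline edited model first and scale the tent height independently, using $\mathrm{tent}_\beta$ with $\beta w/2$ exceeding $-\min_R g^-_A$, as the paper does.
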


\begin{corollary}[adaptive protocols]
	\label{cor:c32}
	\normalfont
	Let $P$ be any deterministic protocol that queries an edited model at finitely many inputs where each is possibly chosen from earlier answers and then gives its verdicts. If $P$ approves some genuinely edited model, it also approves some edited model that retains the skill on a positive-measure set. Hence, no finite-query deterministic protocol is both \emph{sound} for removal and \emph{nontrivial} over any region with nonempty interior.
\end{corollary}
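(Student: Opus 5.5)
The plan is to derive the corollary from the mechanism behind Proposition~\ref{prop:p3}, run through the transcript of a deterministic protocol. Fix $P$ and a genuinely edited model $g^\star=\ablate(f,C)$ that $P$ approves, meaning $(A,R,\le)$ holds for $g^\star$. Because $P$ is deterministic and finite, running it on $g^\star$ produces a fixed transcript: queries $t_1,\dots,t_m\in\D$ together with the answers $g^\star(t_1),\dots,g^\star(t_m)$. Let $T=\{t_1,\dots,t_m\}$. The key observation is a simple induction on $i$. Suppose $g'$ is any model with $g'(t)=g^\star(t)$ for all $t\in T$ (all heads). Then the $i$th query of $P$ depends only on the earlier answers, so it is the same $t_i$ under $g'$, and it receives the same answer. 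Hence $P$ produces the identical transcript and the identical verdict on $g'$, and approves it. The rest of the argument constructs such a $g'$ that is still an edited model in the sense of Definition~\ref{def:edit} but has $g'_A>0$ on a set of positive measure.

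The construction is a one-dimensional ReLU tent, as in Fig.~\ref{fig:illusion}(b). Take $R$ with nonempty interior, so in particular $l_0<u_0$. Since $T$ is finite, choose $a\in(l_0,u_0)$ and $w>0$ such that the slab $\Sigma=\{x:\,|x_0-a|<w\}$ satisfies $[a-w,a+w]\subset(l_0,u_0)$ and contains no point of $T$. Define the tent
\[
\tau(x)=\relu(x_0-a+w)-2\relu(x_0-a)+\relu(x_0-a-w).
\]
It is nonnegative, vanishes outside $\Sigma$, and equals $w$ on the hyperplane $x_0=a$. Build $f'$ from $f$ by appending three hidden neurons with first-layer rows $e_0^\top$ and biases $-a+w,\,-a,\,-a-w$, and with second-layer weights $K,-2K,K$ into head $A$ only (zero into every other head). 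Keep $C'=C$. Ablation acts only on the neurons in $C$, so $g'=\ablate(f',C')=g^\star+K\tau\,e_A$. This $g'$ has the same architecture class (Definition~\ref{def:model}, with width $H+3$) and rational weights whenever $a,w,K$ are rational. Because $\tau$ vanishes off $\Sigma$ and $T\cap\Sigma=\emptyset$, we get $g'=g^\star$ on $T$ at every head, so by the induction above $P$ approves $g'$.

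It remains to show that $g'$ retains the skill on a positive-measure set. This is the only step with any quantitative content. The set $R\cap\{x_0=a\}$ is compact and $g^\star_A$ is continuous, so $M=\max_{R\cap\{x_0=a\}}(-g^\star_A)$ is finite. Choose rational $K>M/w$. Then $g'_A=g^\star_A+Kw>0$ at every point of $R\cap\{x_0=a\}$. By continuity the set $\{x\in R:\,g'_A(x)>0\}$ contains a relatively open neighbourhood of a point of that face, intersected with the interior of $R$, and this has positive Lebesgue measure. Other heads, including skill $B$, are untouched, matching Proposition~\ref{prop:p3}(b). One point needs care: a survivor pocket must come from an open set meeting the interior of $R$, not only from the boundary of $R$. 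Placing $[a-w,a+w]$ strictly inside $(l_0,u_0)$ and using the nonempty interior of $R$ in the other coordinates handles this.

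For the final sentence of the corollary, \emph{sound} means $P$ never approves a model for which $(A,R,\le)$ fails, and \emph{nontrivial} means $P$ approves at least one genuinely edited model. Given nontriviality, the construction above yields an approved $g'$ violating removal on a positive-measure set, which contradicts soundness. So no finite-query deterministic protocol is both sound and nontrivial over any region $R$ with nonempty interior.
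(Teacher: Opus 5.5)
Your proposal is correct and follows the paper's proof. You replay the deterministic transcript on a genuinely edited model, add a three-neuron tent on head $A$ (no bias change) supported in a slab of $x_0$ that avoids the queried first coordinates, obtain a legitimate edited model by ablating the same $C$ in the parent-plus-gadget network, and scale the tent until it overcomes the lower bound of $g_{0,A}$. The paper uses a global lower bound $-G$ on $R$ to get the explicit survivor slab $|x_0-c|<w/2$, whereas you bound only on the cross-section $x_0=a$ and invoke continuity; you also make explicit the nonempty-interior hypothesis that the paper's ``positive-measure set'' tacitly needs.
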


\emph{Construction (sketch; full proof App.~\ref{app:proofs}).} The finitely many test points have finitely many first coordinates; with $l_0,u_0$ they leave an open interval $(c-w,c+w)$ disjoint from all of them. Three hidden neurons with pre-activations $x_0-(c-w),\,x_0-c,\,x_0-(c+w)$ feed head $A$ the combination
\[
	\mathrm{tent}_\beta(x_0)=\beta\bigl[\relu(x_0-(c{-}w))-2\relu(x_0-c)+\relu(x_0-(c{+}w))\bigr],
\]
which is a ``hat'' that rises to peak $\beta w$ at $x_0=c$, returns to zero at $c+w$, and is \emph{identically zero} thereafter (the slopes $\beta,-2\beta,\beta$ cancel); in particular it vanishes at every test point. Lowering head $A$'s bias by $\beta w/2$ makes $g_A=\mathrm{tent}_\beta-\beta w/2$ report ``removed'' at every test point yet exceed $0$ exactly on the slab $|x_0-c|<w/2$, of relative volume $w/(u_0-l_0)<\delta$ (Fig.~\ref{fig:illusion}b). For Corollary~\ref{cor:c32}, adding the same gadget to a genuinely edited $g_0$ leaves its transcript on $P$'s queries unchanged, and therefore $P$'s deterministic verdict is unchanged, while re-introducing a positive-measure survivor.

The scope is deliberate and stated: Cor.~\ref{cor:c32} covers deterministic black-box protocols (adaptive included); a randomized adaptive tester and a white-box analysis (such as the SMT solver used in this paper) are outside it. Informal statements of P3 should therefore always be read as ``no finite deterministic black-box behavioral test.'' The constructive proof is the recipe executed by our illusion experiment, and \S\ref{sec:findings} reports naturally trained illusions showing the hypothesis needs no adversarial constructor.

\subsection{Surgery vs.\ steering (P4)}

\begin{proposition}[edit-class separation, one hidden layer]
	\label{prop:p4}
	\normalfont
	Let $g$ be one-hidden-layer, $C\subseteq\{1,\dots,H\}$ a neuron set, $\mathrm{steer}_s$ targeted suppression at dose $s$, $\mathrm{abl}$ ablation of $C$, and $M_j(R)=\max_{x\in R}\mathrm{pre}_j(x)$ (attained at a box corner). Then:
	\begin{enumerate}[label=\textbf{(\alph*)}]
		\item $\mathrm{steer}_{s,h}(x)=\mathrm{abl}_h(x)+\sum_{j\in C}W_2[h,j]\,\relu(\mathrm{pre}_j(x)-s)$ for all $x$.
		\item If $s\ge\max_{j\in C}M_j(\D)$ then $\mathrm{steer}_s\equiv\mathrm{abl}$ on $\D$, so every certificate and radius is identical.
		\item The doses certifying a removal form an interval $[s^*(R),\infty)$ with $s^*(R)$ finite and non-decreasing in $R$, hence $s^*(R_\eps)$ non-decreasing in $\eps$ (more wiggle room demands more dose).
		\item For a general steering vector $v=s\hat v$, on any set of pattern-stable points sharing active set $\mathcal A\subseteq\{1,\dots,H\}$, head $B$'s logit shifts by exactly $s\sum_{j\in\mathcal A}W_2[B,j]\hat v_j$---linear in dose. A circuit disjoint from head $B$ ($W_2[B,j]=0$) yields provably zero collateral at every dose; a data-derived direction heard by head $B$ erodes preservation in proportion to dose and breaks it outright once the shift exceeds $B$'s margin.
	\end{enumerate}
\end{proposition}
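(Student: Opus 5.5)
The plan is to compare the two edited networks neuron by neuron. For part (a), note that ablation and targeted suppression both leave every neuron outside $C$ untouched, so they agree there. Ablation sets $\mathrm{pre}_j\equiv 0$ for each $j\in C$, so that neuron outputs $\relu(0)=0$. Steering leaves neuron $j\in C$ emitting $\relu(\mathrm{pre}_j(x)-s)$. Writing $\mathrm{steer}_{s,h}(x)=\sum_{j\notin C}W_2[h,j]\relu(\mathrm{pre}_j(x))+\sum_{j\in C}W_2[h,j]\relu(\mathrm{pre}_j(x)-s)+(b_2)_h$ and subtracting $\mathrm{abl}_h(x)$ gives the identity directly.

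Part (b) is then immediate. Since $\mathrm{pre}_j$ is affine, its maximum over the box $\D$ (or any box $R$) is attained at a corner by linearity, which gives $M_j(R)$. If $s\ge M_j(\D)$, every correction term $\relu(\mathrm{pre}_j(x)-s)$ vanishes on $\D$, so the two models coincide as functions on $\D$. Every claim, and hence every certificate and certified radius in Definition~\ref{def:radius}, is then identical, because these depend only on the function values on $\D$.

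For part (c), I fix $R$ and view $\phi_x(s)=\mathrm{steer}_{s,A}(x)$. The key step, and the main obstacle, is monotonicity in $s$. By (a), $\phi_x(s)=\mathrm{abl}_A(x)+\sum_{j\in C}W_2[A,j]\relu(\mathrm{pre}_j(x)-s)$, and each $\relu(\mathrm{pre}_j-s)$ is non-increasing in $s$. That makes $\phi_x$ non-increasing only when $W_2[A,j]\ge0$ for $j\in C$. So I would either state this sign hypothesis explicitly (the circuit feeds skill $A$ positively, which is the natural setting) or fall back to the weaker form. In the weaker form, the certifying set contains $[\max_j M_j(R),\infty)$ by (b) applied on $R$. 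Under the sign hypothesis, the certifying set $\{s:\forall x\in R,\ \phi_x(s)\le0\}$ is an intersection of up-sets. Each up-set is closed because $\phi_x$ is continuous. So the certifying set is a closed up-ray $[s^*(R),\infty)$. It is nonempty, and $s^*$ is finite, provided ablation itself removes the skill: by (b) it contains $\max_j M_j(R)$. Monotonicity of $s^*$ in $R$ follows from Proposition~\ref{prop:p1}(a), since a dose certifying a larger region certifies any subregion. Because $R_\eps$ grows with $\eps$, $s^*(R_\eps)$ is non-decreasing in $\eps$.

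For part (d), suppose the points share active set $\mathcal A$ under both the unedited model and the dose-$s$ steered model; this is what pattern-stable means. Then $g_B(x)=\sum_{j\in\mathcal A}W_2[B,j]\mathrm{pre}_j(x)+(b_2)_B$. Steering replaces $\mathrm{pre}_j$ by $\mathrm{pre}_j+s\hat v_j$, so the shift is exactly $s\sum_{j\in\mathcal A}W_2[B,j]\hat v_j$, which is linear in $s$. If $W_2[B,j]=0$ for all $j\in C$ and $\hat v$ is supported on $C$, the result holds for all points, not just stable ones. In that case $g_B$ does not depend on the neurons in $C$ at all, so collateral is zero at every dose. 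When the shift is negative, preservation erodes linearly in $s$, and it fails once $|{\rm shift}|$ exceeds $\min_{x}g_B(x)$ over the preservation region. That threshold is the margin.
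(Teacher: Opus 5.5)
Your argument follows the paper's proof part for part: the neuron-by-neuron leftover identity for (a), the corner maximum for (b), the closed up-ray of certifying doses for (c), and the active-set linearization for (d).

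Your sign caveat in (c) is a genuine correction, not excess caution. The paper's proof asserts that each leftover term is nonincreasing in $s$ ``($\relu$ nondecreasing, its argument decreasing)''. That silently requires $W_2[A,j]\ge 0$ for $j\in C$, and without this hypothesis (c) is false. For instance, take $W_1=0$, $b_1=(1,2,3)$, $C=\{1,2,3\}$, $W_2[A,\cdot]=(10,-10,5)$ and $(b_2)_A=-1$. Then $\mathrm{steer}_{s,A}\equiv -1+10\relu(1-s)-10\relu(2-s)+5\relu(3-s)$. The unedited model fires (value $4$) and ablation removes (value $-1$), yet the certifying doses are $[0.8,1.2]\cup[2.8,\infty)$, which is not an up-ray. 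Your other proviso, that $s^*(R)$ is finite only when ablation itself certifies removal on $R$, is also used implicitly by the paper.

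Two small tightenings in (d):
\begin{enumerate}
\item The breaking threshold should be the minimum of $g_B$ over the pattern-stable part of the region, as the paper states it. The global minimizer of $g_B$ need not remain pattern-stable at the breaking dose.
\item A positive shift erodes the $\le$ half of the preservation pair in the same way that a negative shift erodes the $>$ half.
\end{enumerate}
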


P4 turns the edit comparison of \S\ref{sec:findings-edits} from measurement into theorem for the one-hidden-layer class: surgical edits \emph{must} show zero collateral when the circuit is disjoint from head $B$; diff-of-means steering heard by head $B$ \emph{must} erode skill $B$'s preserved margin linearly in dose and break it at sufficient dose. The deeper-network generalization is open (the leftover term propagates through later nonlinearities). P2--P4 each have a machine-checked
witness (App.~\ref{app:witnesses}); P1 follows from the meaning of the universal quantifier.

\section{Method}
\label{sec:method}

\subsection{Exact encoding (SMT)}
\label{sec:method-exact}

We encode a network (Def.~\ref{def:model}) into the Z3 SMT solver~\cite{z3} using \emph{exact rationals}: every weight is converted to a fraction, and each ReLU $\relu(z)$ becomes the split case $\mathrm{ite}(z>0,z,0)$.  A claim $(h,R,\rhd)$ is proved by asserting its negation (an input in $R$ violating it) and obtaining \texttt{unsat}. Because the formula is quantifier-free linear real arithmetic (QF-LRA) over rationals, the verdict is exact and complete (Proposition~\ref{prop:p2}c). An \texttt{unsat} certifies the claim for \emph{every} point of $R$, whereas \texttt{sat} returns a concrete counterexample (the solver catching an input for which the skill still fires; such an input might be missed by a test suite). Certified radii are found by bisecting proofs (Proposition~\ref{prop:p2}d). Every proof is cross-checked against a brute-force grid; because the grid runs the float program while the solver reasons about the ideal network, disagreement beyond float noise signals a bug, and disagreement within $\sim\!10^{-12}$ of zero signals a genuinely zero-margin claim (App.~\ref{app:floatgap}).

\paragraph{Sequences and continuous noise in one solver query}
For a token model we quantify in a single query over all token sequences $\times$ a continuous ball of embedding noise. Two encodings are available. The Boolean encoding uses Boolean token selectors (the exact discrete claim) but branches combinatorially and exceeds the time budget at every size we tried. The \emph{hull relaxation} replaces the discrete token choice with continuous mixture weights over the embedding simplex; the resulting region is a strict superset of the discrete inputs, so an \texttt{unsat} still certifies the discrete claim via Proposition~\ref{prop:p1}a, with no Boolean branching. The hull relaxation is what makes the transformer tractable, once the model is shrunk to the measured frontier (\S\ref{sec:findings}).

\paragraph{Handling of the float gap}
Certificates concern the exact rational network while the executed model runs float64. A standard forward-error bound puts the gap at $\approx10^{-13}$ here. The toy model's removal, preservation and control certificates are re-proved with explicit slack $\gamma=10^{-9}$, and every threshold-gate transformer certificate carries slack $10^{-6}$, so these certificates transfer to the executed program. A constructed zero-margin claim is proved by the solver yet violated by the float program by $\sim\!10^{-16}$ (App.~\ref{app:floatgap}).

\subsection{Bound-propagation extension (CROWN)}
\label{sec:method-bp}

Softmax and LayerNorm cannot be encoded exactly: their exponentiations and divisions are outside linear arithmetic. To certify edits on a standard softmax\,+\,LayerNorm transformer we switch to bound propagation (auto\_LiRPA \cite{autolirpa} / CROWN~\cite{crown}), which computes sound linear lower and upper bounds on the output over an input ball. Bound propagation is \emph{sound but incomplete}: it may fail to certify a true claim, but never certifies a false one. Two checks support its results.

\paragraph{M0 (validate the pipeline against the exact solver)}
Bound propagation is sound by construction~\cite{crown}, so this check (M0) validates the implementation rather than the guarantee. On the toy ReLU MLP both tools apply and the exact solver is complete (Proposition~\ref{prop:p2}c), so its certified radius is the true radius. CROWN reproduces it exactly (\S\ref{sec:findings}): the two pipelines encode the same model, edit and claim, and the relaxation is tight on this subject. Tightness does not transfer to the softmax subject, where no exact oracle is available; attack bracketing covers that case.

\paragraph{Attack bracketing}
Beyond the exact frontier we bracket every certified radius from above by a projected-gradient attack: for each claim we report $\text{certified}\le\text{true}\le\text{attack}$. A certified radius above an attack radius would be unsound; no reported radius is. The gap between the two measures prover looseness, reported per claim.

\subsection{Scope of the bound-propagation results}
\label{sec:scope-bp}
A CROWN radius is a lower bound: the interval up to the attack radius is unresolved, and its width measures prover looseness rather than the edit's fragility. The exact solver instead returns the tipping point itself (Proposition~\ref{prop:p2}b). The relaxation is degenerate at $\eps=0$, so the bisection probes only balls of radius $\ge10^{-4}$, which is the smallest radius we report. Greedy component search finds no separable head/MLP circuit in a from-scratch softmax transformer, so skill $A$ is removed by a position-scoped attention knockout. The certificate concerns the behavioral effect of that intervention instead of the identification of a component-level circuit.

\section{Findings}
\label{sec:findings}

The following findings are discussed in order of safety significance and each finding is supported by whichever experimental subject demonstrates it. The subjects themselves (a progression from a two-input ReLU network to a softmax\,+\,LayerNorm transformer) are summarized, with every certified radius, in Table~\ref{tab:results} and detailed in App.~\ref{app:repro}.

\begin{table*}[t]
	\centering
	\caption{Certified radii $\epsstar$ (Def.~\ref{def:radius}) by subject. Parenthesized values are skill $B$'s radius before the edit, so collateral reads as the drop. Exact removal radii are attained tipping points and exact preservation radii are bracketed within the bisection tolerance (Proposition~\ref{prop:p2}b,d); bound-propagation radii are sound lower bounds. Values from committed reports (App.~\ref{app:repro}).}
	\label{tab:results}
	\footnotesize
	\setlength{\tabcolsep}{4pt}
	\begin{tabular}{@{}lllrll@{}}
		\toprule
		\textbf{Subject}         & \textbf{Size}                   & \textbf{Prover} & \textbf{Pert.\ dims} & \textbf{Removal $\epsstar$} & \textbf{Preservation $\epsstar$ (unedited)} \\
		\midrule
		Toy MLP (separated)      & $d{=}2$, 16 hidden              & SMT             & 2                    & $\ge0.6$\,$^{a}$            & $0.0996$ $(0.0996)$                         \\
		Toy MLP (entangled)      & $d{=}2$, 16 hidden              & SMT             & 2                    & $\ge0.6$\,$^{a}$            & $0.0961$ $(0.0961)$                         \\
		Deep MLP                 & $3\text{-}12\text{-}8\text{-}2$ & SMT             & 3                    & $\ge0.5$\,$^{a}$            & $0.0742$ $(0.0938)$\,$^{b}$                 \\
		Gate transformer         & 8 wide, 2 heads                 & SMT (hull)      & 48                   & $0.0156$                    & $0.0125$ $(0.0344)$                         \\
		Adder transformer        & $p{=}5$, 8 wide, 2 heads        & SMT (siamese)   & 40                   & $\ge0.05$\,$^{a}$           & $\ge0.002$\,$^{c}$                          \\
		Softmax\,+\,LN transf.\  & 64 wide, 4 heads, 2 layers      & CROWN           & \textbf{448}         & $0.0091$\,$^{d}$            & $0.0079$, $0.0098$\,$^{d}$                  \\
		\bottomrule
	\end{tabular}

	\parbox{\textwidth}{\footnotesize\vspace{2pt}
		$^{a}$ saturated at the search ceiling: bounded by the probe range, not the model.
		$^{b}$ four-neuron layer-1 circuit, bottleneck layer banned.
		$^{c}$ plus exact correctness on all $625$ clean sequences at $\eps{=}0$.
		$^{d}$ attack-bracketed: $0.0091\!\le\!0.03$, $0.0079\!\le\!0.05$, $0.0098\!\le\!0.1$.}
\end{table*}

\subsection{An edit's behavioral effect (removal and preservation of skills) can be proved over a whole input region on a transformer}
\label{sec:findings-positive}

On a standard softmax\,+\,LayerNorm transformer (which the SMT solver cannot encode) bound propagation certifies that a position-scoped edit removes skill $A$ (accuracy drops to chance) and preserves skill $B$ (accuracy $100\%$), over continuous embedding-noise balls in $448$ perturbation dimensions, roughly $9\times$ the exact frontier established by SMT. Every radius is bracketed by attack ($\text{certified}\le\text{true}\le\text{attack}$): removal $0.0091\le0.03$, preservation $0.0079\le0.05$ and $0.0098\le0.1$; none unsound (Fig.~\ref{fig:bp}).

The same guarantee holds exactly on a threshold-gate transformer. The exact solver certifies removal and preservation over all token sequences $\times$ a continuous embedding ball in one query. The ablated circuit spans an attention head and three MLP neurons, and switching off the heads alone, or the MLP alone, leaves skill $A$ firing (Fig.~\ref{fig:circuit}). Removal certifies to radius $0.016$ and preservation to $0.013$ (Table~\ref{tab:results}), with the collateral quantified: skill $B$'s radius falls from $0.034$ to $0.013$ under the edit. On a known-formula subject (two independent modular adders on disjoint positions) both sides are certified over noise: exact-rational correctness on all $625$ clean sequences, removal to radius $\ge0.05$ and preservation robust to radius $\ge0.002$ (App.~\ref{app:repro}).

\begin{figure}[t]
	\centering
	\includegraphics[width=\linewidth]{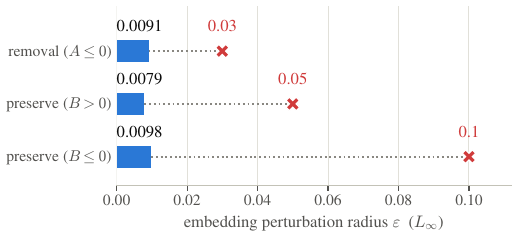}
	\caption{Certified removal and preservation on a standard softmax\,+\,LayerNorm transformer, which the exact encoding cannot represent, over embedding-noise balls in $448$ dimensions (about $9\times$ the exact frontier of Table~\ref{tab:frontier}). Solid bars are sound certified radii from bound propagation; each cross marks the radius at which a projected-gradient attack first breaks the claim, so the true radius lies on the dotted span between them. No attack succeeds below a certified radius.}
	\label{fig:bp}
\end{figure}

\begin{figure}[t]
	\centering
	\includegraphics[width=\linewidth]{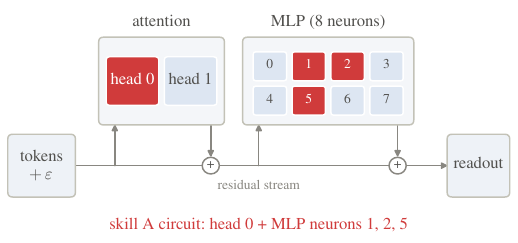}
	\caption{The certified circuit is distributed across attention and the MLP. One decoder block of the exact subject ($8$ wide, $2$ heads, $L=6$), drawn as its forward pass runs: each block reads the residual stream and adds its output back. Skill~$A$'s circuit (filled) was found by a greedy search over heads and MLP neurons that preserves skill~$B$; switching off the heads alone, or the MLP alone, leaves skill~$A$ firing. Ablating the whole circuit certifies removal of~$A$ and preservation of~$B$ (Table~\ref{tab:results}).}
	\label{fig:circuit}
\end{figure}

\subsection{No finite test protocol can certify removal}
\label{sec:findings-illusion}

Proposition~\ref{prop:p3} states that the guarantee is unreachable by testing. As witness, we show a constructed edit passes a $40{,}401$-point grid with preservation of skill $B$ proved, yet the solver refutes removal with a concrete surviving input. Ordinary training through our experiments produces three naturally occurring illusions in $160$ models (from models with $\ge 4$ inputs), each passing a full grid plus $1{,}000$ random inputs plus preservation checks, and each is refuted by the solver. Their survivor pockets, where skill $A$ is still alive, are tiny: zero hits in $2\times10^6$ uniform darts give a $95\%$ Clopper--Pearson ceiling of $1.5\times10^{-6}$ of the domain. A diligent tester may ship these edits believing the skill removed, while the skill survives in a vanishing-but-nonzero pocket (Fig.~\ref{fig:illusion}).

\subsection{Feature non-interference}
\label{sec:findings-influence}

A stronger security guarantee is demonstrating that the component's output is provably independent of the forbidden input rather than merely that the readout is below a threshold. We certify it directly with a two-copy (siamese) encoding (Fig.~\ref{fig:influence}) that bounds a \emph{certified influence}: the worst-case change in the head's output as the forbidden input varies over the whole domain. On the transformer the certified influence reaches \textbf{exactly $0$}, meaning the readout ignores the quote content everywhere, out to a \emph{larger} radius than removal itself. On the toy subject the surgical edit drives influence from $22$ to $<\!0.001$; on a deliberately entangled ``messy'' model the edit certifies removal yet influence falls only from $63$ to $10$, quantifying that \emph{removal is not deafness}. The illusion edit is refuted here too, with a concrete input \emph{pair} and influence ceiling $0.60$---exactly the tent's height.

\begin{figure}[t]
	\centering
	\includegraphics[width=\linewidth]{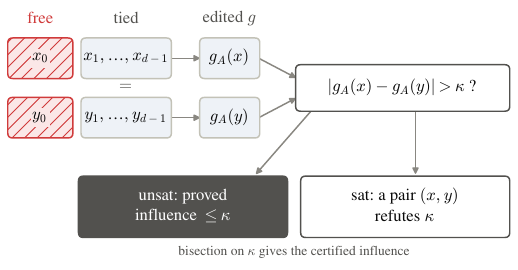}
	\caption{The certified-influence query. The edited network~$g$ is encoded twice, over inputs $x$ and $y$ in the same region that agree on every coordinate except the forbidden one ($x_0$ on the toy models; the quote-token content on the transformer, with the embedding noise shared). The solver searches for a pair whose skill-$A$ outputs differ by more than~$\kappa$: \texttt{unsat} proves that changing the forbidden input alone moves the output by at most~$\kappa$, and \texttt{sat} returns such a pair. Bisection on~$\kappa$ yields the certified influence; influence~$0$ certifies feature non-interference.}
	\label{fig:influence}
\end{figure}

\subsection{Removal is robust to input perturbation, up to a certified radius}
\label{sec:findings-radius}

Removal and preservation hold not only on the claimed region but on its inflation by the certified radius (Def.~\ref{def:radius}). Read on the region's edge, the radius measures tightness (Proposition~\ref{prop:p2}): the inflated edge stops where the edited model's own decision boundary does, not at a margin added by the prover (Fig.~\ref{fig:radius}). Across the correctness claims of both toy models, certified edges reach within $0.0012$ of the $0.5$ rule for skill~$A$ and within $0.0039$ at worst. The residual gap is the model's decision margin, not prover slack.

\begin{figure}[t]
	\centering
	\includegraphics[width=\linewidth]{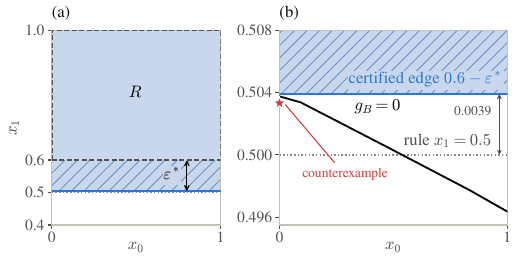}
	\caption{The certified radius in input space, for the entangled toy model after ablation and the claim that skill~$B$ fires on $R=[0,1]\times[0.6,1]$ (Table~\ref{tab:results}). (a)~$R$ (dashed) and its inflation by the certified radius $\epsstar=0.0961$: the certified region is shaded, and the band that inflation adds to~$R$ is hatched. (b)~The strip near the rule $x_1=0.5$ (dotted). The certified edge stops $0.0039$ above the rule, within the bisection tolerance of the highest point of the edited model's own decision boundary $g_B=0$ (solid), at $x_0=0$. At the smallest refuted inflation the solver returns a counterexample (star) beyond that boundary.}
	\label{fig:radius}
\end{figure}

\begin{figure*}[t]
	\centering
	\includegraphics[width=\textwidth]{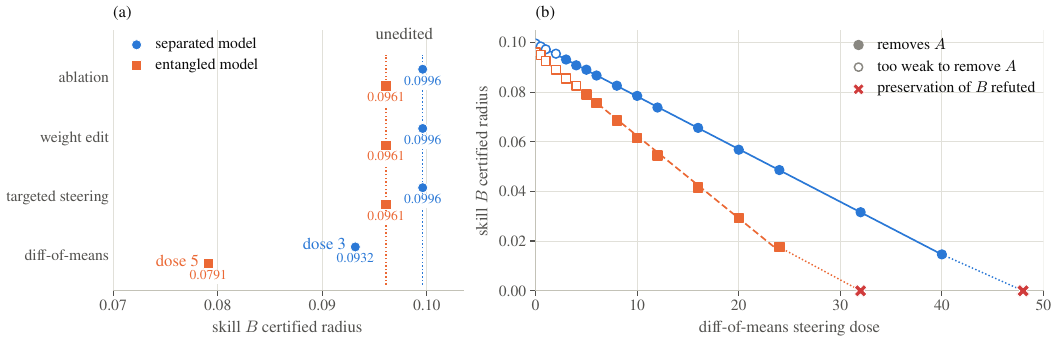}
	\caption{Surgical edits certify zero collateral; diff-of-means steering certifiably erodes the preserved skill. (a)~Skill~$B$'s certified radius after each edit type on a separated and an entangled toy model, against each model's unedited radius (dotted). Ablation, weight editing and targeted steering leave it unchanged; diff-of-means falls short even at the smallest dose that removes skill~$A$. (b)~The same radius against diff-of-means dose. Each marker is a separate proof: filled markers also remove skill~$A$, open ones are too weak to. The radius falls linearly in dose (least-squares fits, $R^2\ge0.999$), consistent with Proposition~\ref{prop:p4}(d), until preservation is refuted (cross).}
	\label{fig:edits}
\end{figure*}

\subsection{Surgical edits certify collateral-free removal, unlike steering}
\label{sec:findings-edits}

Proposition~\ref{prop:p4} predicts, and the experiments confirm, a sharp split.  Surgical edits (ablation, weight edit) certify removal at maximal radius with provably zero (disjoint circuit) or bounded collateral. Realistic diff-of-means steering provably erodes the preserved skill's margin as dose rises (on the separated model the certified radius falls from $0.0996$ to $0.0908$ at dose $4$ and to $0.0656$ at dose $16$, consistent with P4(d)'s linearity), until it breaks preservation outright (Fig.~\ref{fig:edits}). On the transformer the split is starkest: no dose both removes $A$ and preserves $B$ (App.~\ref{app:repro}). A sparse-autoencoder feature clamp, on a model where skill $A$ splits across three features, is certified by the same prover: removal to radius $\ge0.5$ and preservation to $0.064$, against ablation's $0.100$ on the same model (App.~\ref{app:repro}). The dose$\to$collateral claim rests on P4(d)'s proven linearity plus a full dose sweep (Fig.~\ref{fig:edits}b).

\subsection{The exact verification frontier and its sound extension}
\label{sec:findings-reach}

The exact frontier is set by internal branching: the Boolean encoding's cost is exponential in the model's case splits, and it exceeds the time budget at every size. The shrink-plus-hull-relaxation path reaches a two-head transformer exactly (Table~\ref{tab:frontier}). Bound propagation goes past it, validated to \emph{equal} the exact radius on every claim where both provers apply---$0.5{=}0.5$ for removal and $0.0996{=}0.0996$ for both preservation claims (M0)---and attack-bracketed beyond (M1). Finally, the continuous radius is not a disguised discrete-robustness claim. The smallest single-token swap moves the embedding by $2.94$, some $\approx\!324\times$ the certified ball, so the continuous guarantee and discrete-prompt robustness are different objects (App.~\ref{app:repro}).

Taken together, the findings answer the three gaps of \S\ref{sec:related}. First, an edit's effect can be proved rather than sampled: on every subject, from a two-input network to a softmax\,+\,LayerNorm transformer, removal and preservation hold for every input in a continuous region, and on the toy models the exact certificates stop at the model's own decision boundary, not at slack in the prover. Second, passing a test suite is not evidence of removal: ordinary training produces edits that pass every test while the skill survives, and the solver refutes them with a concrete input. Third, the choice of edit is itself certifiable: surgical edits remove a skill with provably zero collateral, whereas steering trades removal against collateral in a way no dose resolves. For a practitioner validating a safety edit, the certificate therefore replaces the test set as the evidence that the edit did what it was meant to do, within the scale bound that \S\ref{sec:limitations} states.
\begin{table}[t]
	\centering
	\caption{Where exact certification stops. Each row is a configuration of the threshold-gate transformer; each cell is one all-sequences hull query of the unedited model's skill-$A$ claim, without noise or with it. The rule marks the frontier; the exact subject of Table~\ref{tab:results} has the bold row's size, with two heads.}
	\label{tab:frontier}
	\footnotesize
	\setlength{\tabcolsep}{4pt}
	\begin{tabular}{@{}lrll@{}}
		\toprule
		\textbf{Configuration}         & \textbf{Noise vars} & $\varepsilon=0$    & $\varepsilon=0.005$ \\
		\midrule
		8 wide, 1 head, $L=4$          & 32                  & \checkmark\ 32\,s  & \checkmark\ 155\,s  \\
		\textbf{8 wide, 1 head, $L=6$} & \textbf{48}         & \checkmark\ 40\,s  & \checkmark\ 230\,s  \\
		\midrule
		8 wide, 1 head, $L=8$          & 64                  & \emph{too loose}   & \emph{too loose}    \\
		12 wide, 1 head, $L=6$         & 72                  & \checkmark\ 31\,s  & \emph{timeout}      \\
		16 wide, 1 head, $L=6$         & 96                  & \checkmark\ 120\,s & \emph{timeout}      \\
		16 wide, 2 heads, $L=8$        & 128                 & \emph{timeout}     & \emph{timeout}      \\
		\bottomrule
	\end{tabular}

	\parbox{\columnwidth}{\footnotesize\vspace{2pt}
		\checkmark\ proved, with solver time; \emph{too loose}: the hull relaxation is inconclusive; \emph{timeout}: exceeds the $300$\,s query budget, and the $128$-variable row also exceeds $30$\,min at $\varepsilon\in\{0.01,0.05\}$.}
\end{table}

\section{Limitations and Scope}
\label{sec:limitations}

\textbf{Scale:} Guarantees are proved on small,
standard-architecture networks; the exact frontier is exponential in internal branching (\S\ref{sec:findings-reach}), and bound-propagation results, while sound, are not exact. Wide certified-to-attack gaps reflect prover looseness, and radii are reported over balls of radius $\ge10^{-4}$ (CROWN is degenerate at $\eps=0$). The edit on the softmax model is a position path-patch, not a component circuit, because a from-scratch softmax transformer yields no separable head/MLP circuit.

\textbf{Input region instead of arbitrary prompt:} The guarantee is over a class of token sequences $\times$ a continuous embedding ball. In the exact cases this covers every discrete prompt in the model's input space and a continuous neighborhood of each (a superset of all possible prompts). But the continuous axis is embedding-space noise. A certified radius is not a discrete-prompt-edit distance, and indeed a single token swap is $\approx\!324\times$ the certified ball (\S\ref{sec:findings-reach}). We therefore claim continuous-region robustness, not frontier-scale prompt robustness.

\textbf{The decidable-oracle assumption:} Every skill we certify has a decidable ground truth, a computable function such as ``input contains more \guillemotleft{} than \guillemotright{}'' or $(a+b)\bmod p$, which is exactly what makes ``removed'' or ``preserved'' a crisp, checkable claim. Real harmful capabilities in frontier models have no such decidable oracle: whether an output ``exhibits a harmful skill'' is contested and, in general, undecidable. The certified-removal setup therefore presupposes a decidable specification of the target skill, and that assumption does not transfer to real harms for free. The known-formula subject is a deliberate step from a threshold rule to an arithmetic specification; the future-work counterpart is in \S\ref{sec:conclusion}. The demonstration is therefore a proof of concept, not a certificate over a real dangerous capability.

\section{Conclusion and Future Work}
\label{sec:conclusion}

We introduced \emph{certified mechanistic edits}: proofs, holding over a continuous input region, that an edit removes one skill and preserves another. We (1) built the framework (P1--P4) and proved that no finite deterministic black-box test can deliver such a guarantee (P3), (2) gave an exact method and mapped its frontier, and (3) went past that frontier soundly with bound propagation on a standard softmax\,+\,LayerNorm transformer.

The natural next step is \emph{certified mechanistic edits at scale and on real skills}: bound propagation on a pretrained small language model with a genuinely safety-relevant capability and a real edit (SAE clamp or unlearning), and a discrete edit-distance certificate to complement the continuous one. The prerequisite that our limitations expose is sharper: applying any of this to real harms first requires a framework that decides (deterministically or probabilistically) whether a model output exhibits a target capability. That decidable or estimable oracle is what would let the certified-edit machinery leave the toy and known-formula regime. Such a step would show that mechanistic edits can be certified on real models and real skills, and that the certified radius is meaningful in discrete prompt space.


\section*{Open Science}
\addcontentsline{toc}{section}{Open Science} All claims are reproducible from committed code and fixed seeds: each reported number is regenerated by a single command from a versioned report (App.~\ref{app:repro}). The exact pipeline depends only on \texttt{numpy} and an SMT solver; the bound-propagation extension isolates its dependencies in a separate environment. The code and all committed reports will be released publicly with the final version of this paper.

\section*{LLM usage considerations}
\addcontentsline{toc}{section}{LLM usage considerations}
LLMs were used for editorial purposes in this manuscript, and all outputs were inspected by the authors to ensure accuracy and originality. They were also used as coding and writing assistants (code scaffolding, prose editing, and literature-scan support). All formal statements, proofs, experimental designs, and numerical results were produced and verified by the authors and by the machine-checked witnesses of App.~\ref{app:witnesses}; no experimental result or citation was taken from an LLM without independent verification against primary sources.

\section*{Ethical Considerations}
\addcontentsline{toc}{section}{Ethical Considerations} This work develops \emph{defensive} guarantees: proofs that a safety edit did what it was intended to do. Two dual-use points need stating. First, Proposition~\ref{prop:p3} and the intervention-illusion experiments show that an edit can pass exhaustive testing while a skill survives; we present this to caution practitioners against over-trusting test-based validation, not as an evasion recipe; the survivors are exhibited precisely so they can be caught. Second, Proposition~\ref{prop:p4} and \S\ref{sec:findings-edits} characterize when steering damages an unrelated skill; the same algebra that predicts collateral also bounds it, and we report it to make edits auditable. All models and skills studied here are toy or known-formula constructs with no dangerous capability; no human-subjects data, personal data, or deployed system is
involved.

\bibliographystyle{IEEEtran}
\bibliography{refs}

\begin{thebibliography}{10}
\providecommand{\url}[1]{#1}
\csname url@samestyle\endcsname
\providecommand{\newblock}{\relax}
\providecommand{\bibinfo}[2]{#2}
\providecommand{\BIBentrySTDinterwordspacing}{\spaceskip=0pt\relax}
\providecommand{\BIBentryALTinterwordstretchfactor}{4}
\providecommand{\BIBentryALTinterwordspacing}{\spaceskip=\fontdimen2\font plus
\BIBentryALTinterwordstretchfactor\fontdimen3\font minus
  \fontdimen4\font\relax}
\providecommand{\BIBforeignlanguage}[2]{{%
\expandafter\ifx\csname l@#1\endcsname\relax
\typeout{** WARNING: IEEEtran.bst: No hyphenation pattern has been}%
\typeout{** loaded for the language `#1'. Using the pattern for}%
\typeout{** the default language instead.}%
\else
\language=\csname l@#1\endcsname
\fi
#2}}
\providecommand{\BIBdecl}{\relax}
\BIBdecl

\bibitem{wei-jailbroken}
A.~Wei, N.~Haghtalab, and J.~Steinhardt, ``Jailbroken: How does llm safety
  training fail?'' \emph{Advances in neural information processing systems},
  vol.~36, pp. 80\,079--80\,110, 2023.

\bibitem{zou-gcg}
A.~Zou, Z.~Wang, N.~Carlini, M.~Nasr, J.~Z. Kolter, and M.~Fredrikson,
  ``Universal and transferable adversarial attacks on aligned language
  models,'' \emph{arXiv preprint arXiv:2307.15043}, 2023.

\bibitem{lucki-unlearning}
\BIBentryALTinterwordspacing
J.~Łucki, B.~Wei, Y.~Huang, P.~Henderson, F.~Tramèr, and J.~Rando, ``An
  adversarial perspective on machine unlearning for ai safety,'' 2025.
  [Online]. Available: \url{https://arxiv.org/abs/2409.18025}
\BIBentrySTDinterwordspacing

\bibitem{compact-proofs}
J.~Gross, R.~Agrawal, T.~Kwa, E.~Ong, C.~H. Yip, A.~Gibson, S.~Noubir, and
  L.~Chan, ``Compact proofs of model performance via mechanistic
  interpretability,'' \emph{Advances in Neural Information Processing Systems},
  vol.~37, pp. 79\,453--79\,515, 2024.

\bibitem{hadad}
I.~Hadad, G.~Katz, and S.~Bassan, ``Formal mechanistic interpretability:
  Automated circuit discovery with provable guarantees,'' \emph{arXiv preprint
  arXiv:2602.16823}, 2026.

\bibitem{somani}
N.~Somani, ``Towards verifiable transformers: Solver-checkable circuit
  explanations,'' \emph{arXiv preprint arXiv:2605.24033}, 2026.

\bibitem{certified-circuits}
A.~Anani, T.~Lorenz, B.~Schiele, M.~Fritz, and J.~Fischer, ``Certified
  circuits: Stability guarantees for mechanistic circuits,'' \emph{arXiv
  preprint arXiv:2602.22968}, 2026.

\bibitem{certified-unlearning}
A.~Koloskova, Y.~Allouah, A.~Jha, R.~Guerraoui, and S.~Koyejo, ``Certified
  unlearning for neural networks,'' \emph{arXiv preprint arXiv:2506.06985},
  2025.

\bibitem{blockcert}
S.~Andric, ``{BlockCert}: Certified blockwise extraction of transformer
  mechanisms,'' \emph{arXiv preprint arXiv:2511.17645}, 2025.

\bibitem{cif}
A.~Asiaee, ``Certified interventional fidelity: Anytime-valid, adaptive
  evaluation of causal claims in mechanistic interpretability,'' \emph{arXiv
  preprint arXiv:2607.08349}, 2026.

\bibitem{stat-unlearn}
A.~Pandey and S.~Kulkarni, ``Statistical unlearning of distributions: A
  hypothesis testing approach,'' \emph{arXiv preprint arXiv:2605.16645}, 2026.

\bibitem{hair-trigger}
Y.~F. Bakman, D.~N. Yaldiz, S.~Avestimehr, and S.~P. Karimireddy,
  ``Hair-trigger alignment: Black-box evaluation cannot guarantee post-update
  alignment,'' in \emph{Proceedings of the 6th Workshop on Trustworthy NLP
  (TrustNLP 2026)}, 2026, pp. 180--203.

\bibitem{yang-yeung}
S.~Yang and Y.-H. Yeung, ``Unlearning as distribution restoration: A controlled
  counterfactual study, a validated selective screen, and the limits of
  oracle-free certification,'' \emph{arXiv preprint arXiv:2607.19442}, 2026.

\bibitem{testing-between}
M.~Ghalan, C.~Rodgers, and Z.~D. Asher, ``Testing between the test cases:
  Proving end-to-end steering in conditions you never drove,'' \emph{arXiv
  preprint arXiv:2609.10951}, 2026.

\bibitem{gif}
A.~Storek, N.~Holzer, Z.~Zhang, and S.~Jana, ``{GIF}: Locally sound geometric
  information flow control for {LLMs},'' \emph{arXiv preprint
  arXiv:2606.23277}, 2026.

\bibitem{repe}
A.~Zou, L.~Phan, S.~Chen, J.~Campbell, P.~Guo, R.~Ren, A.~Pan, X.~Yin,
  M.~Mazeika, A.-K. Dombrowski \emph{et~al.}, ``Representation engineering: A
  top-down approach to {AI} transparency,'' \emph{arXiv preprint
  arXiv:2310.01405}, 2023.

\bibitem{rogue-scalpel}
A.~Korznikov, A.~Galichin, A.~Dontsov, O.~Y. Rogov, I.~Oseledets, and
  E.~Tutubalina, ``The rogue scalpel: Activation steering compromises {LLM}
  safety,'' \emph{arXiv preprint arXiv:2509.22067}, 2025.

\bibitem{abliteration}
A.~Fafu{\l}a, ``Abliteration is not a scalpel: Off-target effects of refusal
  removal on decision disposition across model families,'' \emph{arXiv preprint
  arXiv:2607.17427}, 2026.

\bibitem{steering-pitfalls}
Y.~Li, A.~Fastowski, E.~Zaradoukas, B.~Prenkaj, and G.~Kasneci, ``Analysing the
  safety pitfalls of steering vectors,'' in \emph{Findings of the Association
  for Computational Linguistics: ACL 2026}, 2026, pp. 11\,182--11\,204.

\bibitem{nullspace-steering}
R.~Yin, T.~Han, N.~Xu, C.~Li, P.~He, C.~Zhou, J.~Wang, Z.~Fu, T.~Du, J.~Li
  \emph{et~al.}, ``Compiling activation steering into weights via null-space
  constraints for stealthy backdoors,'' in \emph{Proceedings of the 64th Annual
  Meeting of the Association for Computational Linguistics (Volume 1: Long
  Papers)}, 2026, pp. 26\,228--26\,245.

\bibitem{z3}
L.~de~Moura and N.~Bj{\o}rner, ``Z3: An efficient smt solver,'' in \emph{Tools
  and Algorithms for the Construction and Analysis of Systems}, C.~R.
  Ramakrishnan and J.~Rehof, Eds.\hskip 1em plus 0.5em minus 0.4em\relax
  Berlin, Heidelberg: Springer Berlin Heidelberg, 2008, pp. 337--340.

\bibitem{autolirpa}
K.~Xu, Z.~Shi, H.~Zhang, Y.~Wang, K.-W. Chang, M.~Huang, B.~Kailkhura, X.~Lin,
  and C.-J. Hsieh, ``Automatic perturbation analysis for scalable certified
  robustness and beyond ({auto\_LiRPA}),'' in \emph{Advances in Neural
  Information Processing Systems}, vol.~33, 2020.

\bibitem{crown}
H.~Zhang, T.-W. Weng, P.-Y. Chen, C.-J. Hsieh, and L.~Daniel, ``Efficient
  neural network robustness certification with general activation functions
  ({CROWN}),'' in \emph{Advances in Neural Information Processing Systems},
  vol.~31, 2018.

\end{thebibliography}

\appendices
\section{Full Proofs of P1--P4}
\label{app:proofs}

Throughout, $g$ is a network of Definition~\ref{def:model}, so $g_h$ is
continuous and piecewise-linear on $\D=[0,1]^d$; ``claim'', ``certificate'',
``edit'', ``inflation'' and ``certified radius'' are as in
Definitions~\ref{def:claim}--\ref{def:radius}.

\subsection{Proposition~\ref{prop:p1} (region monotonicity and union closure)}

\begin{proof}
Both parts are immediate from the meaning of the universal quantifier. (a) A
statement true of every point of $R$ is true of every point of any $R'\subseteq
R$. (b) A statement true of every point of each $R_i$ is true of every point of
$\bigcup_i R_i$. Consequently the union of \emph{all} certified regions is itself
certified and contains every certified region; it equals the truth set
$\mathcal{T}=\{x\in\D:g_h(x)\rhd 0\}$ because each singleton $\{x\}\subseteq \mathcal{T}$ is
certified, and any certified region is a subset of $\mathcal{T}$ by definition.
\end{proof}

\subsection{Proposition~\ref{prop:p2} (the certified radius)}

\begin{proof}
Write $\mathrm{lo}_i(\eps)=\max(0,l_i-\eps)$, $\mathrm{hi}_i(\eps)=\min(1,u_i+\eps)$.

\emph{(a) Interval structure.} If $\eps'\le\eps$ then $\mathrm{lo}_i(\eps)\le
\mathrm{lo}_i(\eps')$ and $\mathrm{hi}_i(\eps')\le\mathrm{hi}_i(\eps)$
coordinatewise, so $R_{\eps'}\subseteq R_\eps$; the claim on $R_\eps$ then gives
the claim on $R_{\eps'}$ by Proposition~\ref{prop:p1}(a). Hence $S$ is downward
closed in $[0,\eps_{\max}]$ and contains $0$, i.e.\ an interval.

\emph{(b) Attainment.} Non-strict sign $\le$: fix $x\in R_{\epsstar}$; we show
$g_h(x)\le0$. Each endpoint is $1$-Lipschitz in $\eps$ and
$\mathrm{lo}_i(\eps)\le l_i\le u_i\le\mathrm{hi}_i(\eps)$, so every $R_\eps$ is a
nonempty box. For $\eps<\epsstar$ let $x(\eps)$ be the coordinatewise clip of $x$
into $R_\eps$; then $x(\eps)\in R_\eps$ and $|x(\eps)_i-x_i|\le\epsstar-\eps$, so
$x(\eps)\to x$ as $\eps\uparrow\epsstar$. Choose $\eps_n\uparrow\epsstar$ with
$\eps_n\in S$ (possible since $S\supseteq[0,\epsstar)$ by (a)); then
$g_h(x(\eps_n))\le0$, and by continuity $g_h(x)=\lim_n g_h(x(\eps_n))\le0$. Hence
$\epsstar\in S$. Strict sign $>$: $[0,\epsstar)\subseteq S$ is (a); that
$\epsstar$ may fail is shown by $d=1$, $g(x)=\relu(0.5-x)$, claim $g>0$ on
$R=[0,0.2]$: then $R_\eps=[0,0.2+\eps]$ and the claim holds iff $0.2+\eps<0.5$,
so $S=[0,0.3)$ and at $\epsstar=0.3$ the point $x=0.5$ gives $g(x)=0\not>0$.

\emph{(c) Decidability.} The negated claim is
$\exists x:\bigwedge_i(\mathrm{lo}_i\le x_i\le\mathrm{hi}_i)\wedge\lnot(g_h(x)\rhd
0)$, where $g_h(x)$ is built from rational constants, addition,
multiplication-by-constant, and $\mathrm{ite}(z>0,z,0)$ terms. Eliminating each
$\mathrm{ite}$ into a disjunction of two linear cases yields a quantifier-free
formula of linear real arithmetic; satisfiability of QF\_LRA is decidable (e.g.\
by simplex-based DPLL(T)), and with rational inputs the answer is exact. A
solver timeout returns ``unknown'', which the implementation reports
rather than treating as either verdict.

\emph{(d) Cost bound.} One query at $\eps=0$ (vacuity) and one at $\eps_{\max}$
(saturation); thereafter bisection maintains a bracket $[\ell,\mathrm{hi}]$ with
$\ell\in S$, $\mathrm{hi}\notin S$---valid because $S$ is an interval by
(a)---halving $\mathrm{hi}-\ell$ once per query until $\mathrm{hi}-\ell\le\tau$,
i.e.\ $\lceil\log_2(\eps_{\max}/\tau)\rceil$ queries. Since
$\ell\le\epsstar\le\mathrm{hi}$, $|\ell-\epsstar|\le\tau$. The bracket half of
the guarantee assumes each probe is answered; treating ``unknown'' as a failed
probe preserves soundness of $\ell\in S$ but voids $|\ell-\epsstar|\le\tau$.
\end{proof}

\subsection{Proposition~\ref{prop:p3} and its corollaries}

\begin{proof}[Proof of Proposition~\ref{prop:p3}]
Constructive. \emph{The gadget.} The finitely many test points have finitely
many first coordinates $\{t_0:t\in T\}$; together with $l_0,u_0$ these leave an
open interval $(c-w,c+w)\subset(l_0,u_0)$ disjoint from all of them (finitely
many points cannot cover an interval: pick $c$ strictly between two consecutive
such values and $w$ below half the gap). Shrink $w$ further if needed so
$w<\delta(u_0-l_0)$. Take three hidden neurons with pre-activations
$x_0-(c-w),\,x_0-c,\,x_0-(c+w)$ and feed head $A$
\[
  \mathrm{tent}_\beta(x_0)=\beta\bigl[\relu(x_0-(c{-}w))-2\relu(x_0-c)+\relu(x_0-(c{+}w))\bigr].
\]
A case check on the four intervals cut by $c-w,c,c+w$ shows $\mathrm{tent}_\beta$
is the ``hat'': zero for $x_0\le c-w$, rising linearly to peak $\beta w$ at
$x_0=c$, falling to zero at $x_0=c+w$, and \emph{identically zero} thereafter
(the slopes $\beta,-2\beta,\beta$ cancel). In particular
$\mathrm{tent}_\beta\ge0$ and $\mathrm{tent}_\beta(t_0)=0$ at every test point.

\emph{The model.} Let $f$ consist of a main pathway for skill $A$ with decision
margin exceeding $\beta w/2$ on $A$'s specification regions; an untouched
pathway for skill $B$ reading only $x_1$ (zero weight from the gadget into head
$B$); the gadget neurons into head $A$; and head-$A$ bias lowered by $\beta w/2$
(the ``hurdle''). Let $C$ be the main-pathway $A$-neurons and $g=\ablate(f,C)$.

(a) Pre-edit, head $A=\mathrm{main}(x_0)+\mathrm{tent}_\beta(x_0)-\beta w/2$. On
$A$'s HIGH region the main margin exceeds $\beta w/2$ and $\mathrm{tent}_\beta\ge
0$, so head $A>0$; on $A$'s LOW region the bump is absent (it lies inside HIGH,
so $\mathrm{tent}_\beta=0$ there), the main pathway is nonpositive, and the
hurdle only lowers it, so head $A\le0$. Skill $B$'s claims hold as its pathway is
disjoint from the gadget and $C$.

(b) Post-edit, $g_A(x)=\mathrm{tent}_\beta(x_0)-\beta w/2$. At each $t\in T$,
$\mathrm{tent}_\beta(t_0)=0$, so $g_A(t)=-\beta w/2<0$; and $B$'s pathway,
untouched and receiving nothing from the gadget, preserves both claims on their
full regions.

(c) $g_A(x)>0$ iff $\mathrm{tent}_\beta(x_0)>\beta w/2$ iff $|x_0-c|<w/2$; so
$S_g=\{x\in R:x_0\in(c-w/2,c+w/2)\}$ is open, nonempty, of relative volume
$w/(u_0-l_0)<\delta$.
\end{proof}

\begin{corollary}[random testing]
\label{cor:c31}
A test drawing $m$ i.i.d.\ uniform points from $R$ wrongly approves $g$ with
probability at least $(1-\delta)^m\ge1-m\delta$.
\end{corollary}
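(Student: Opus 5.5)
The plan is to reuse the model $g$ constructed in the proof of Proposition~\ref{prop:p3}, with the test set there taken to be empty or arbitrary; the choice does not matter for this bound. Only part (c) of that proposition is needed. It says the survivor set $S_g=\{x\in R: x_0\in(c-w/2,c+w/2)\}$ is nonempty and has relative volume $\vol(S_g)/\vol(R)=w/(u_0-l_0)<\delta$. The gadget also requires $w<\delta(u_0-l_0)$, so we may take $w$ as small as we like.

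A random test of $m$ i.i.d.\ uniform points from $R$ approves removal exactly when every sampled point $X_k$ satisfies $g_A(X_k)\le0$, that is, when no point lands in $S_g$. The first step is to compute the per-sample probability. Under the uniform distribution on $R$, $\Pr[X_k\in S_g]=\vol(S_g)/\vol(R)=:p$, and $p<\delta$. By independence, the probability of approval is therefore $(1-p)^m$. Because $p<\delta$ and $t\mapsto(1-t)^m$ is non-increasing on $[0,1]$, this gives $(1-p)^m\ge(1-\delta)^m$. This approval is wrong, since $S_g$ is nonempty and open, so the skill survives on a positive-measure set.

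The second inequality, $(1-\delta)^m\ge1-m\delta$, is Bernoulli's inequality for $\delta\in[0,1]$. It follows by a short induction on $m$: $(1-\delta)^{m+1}\ge(1-m\delta)(1-\delta)=1-(m+1)\delta+m\delta^2$. For $\delta>1$ the right-hand side $1-m\delta$ is negative whenever $m\ge1$, so the bound is trivially true there. In that regime one simply notes the inequality in the form it is intended, with $\delta$ replaced by $\min(\delta,1)$.

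The main obstacle is a measure-theoretic subtlety rather than the algebra. We must make sure the event ``approve'' is exactly $\{X_k\notin S_g\ \forall k\}$, so that boundary points of $S_g$ cause no issue. We must also handle the degenerate case $\vol(R)=0$, where some side other than the $x_0$-side has $l_i=u_i$. The first point is settled by the ``iff'' characterization of $g_A>0$ in part (c). For the second, ``uniform on $R$'' is read as the product of uniform measures on the nondegenerate sides. The ratio is then $w/(u_0-l_0)$ either way, because $S_g$ is a slab in the $x_0$ coordinate alone.
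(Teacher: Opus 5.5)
Your proposal is correct and follows the paper's proof. Both take the per-draw hit probability $\vol(S_g)/\vol(R)<\delta$ from Proposition~\ref{prop:p3}(c), use independence to bound the probability that every draw misses $S_g$ from below by $(1-\delta)^m$, and finish with Bernoulli's inequality; your extra points (fixing $g$ before sampling so the test set in the construction may be empty, restricting to $\delta\le1$, and the degenerate-box case) are refinements the paper leaves implicit, not a different route.
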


\begin{proof}
By Proposition~\ref{prop:p3}(c), each draw lands in $S_g$ with probability below
$\delta$; independence gives miss probability $>(1-\delta)^m\ge1-m\delta$ (Bernoulli). For
the constructed gadget the pocket volume is exact; for the naturally-trained
illusions of \S\ref{sec:findings-illusion} it is only sampled (zero hits in
$2\times10^6$ darts, a $95\%$ Clopper--Pearson ceiling of $1.5\times10^{-6}$).
\end{proof}

\begin{proof}[Proof of Corollary~\ref{cor:c32}]
Run the deterministic protocol $P$ against a genuinely edited $g_0$
($g_{0,A}\le0$ on all of $R$); it queries a finite set $T'$ and approves. Apply
the gadget to $T'$: choose $(c-w,c+w)$ avoiding $T'$'s first coordinates and add
the three tent neurons to head $A$ with pattern $(\beta,-2\beta,\beta)$ and no
bias change, giving $g=g_0+\mathrm{tent}_\beta$ on head $A$. Since
$\mathrm{tent}_\beta$ vanishes at every point of $T'$, $g$ agrees with $g_0$ at
every query, so $P$'s transcript---and hence its verdict, $P$ being
deterministic---is identical: $P$ approves $g$. But $g_{0,A}$ is continuous on
the compact $R$, hence bounded below by some $-G$; choosing $\beta$ with $\beta
w/2>G$ makes $g_A>0$ on $|x_0-c|<w/2$, a positive-measure set. ($g$ is a
legitimate edited model: add the same gadget to $g_0$'s parent and ablate the
same $C$.)
\end{proof}

\noindent\emph{Scope.} Corollary~\ref{cor:c31} covers i.i.d.\ uniform sampling;
Corollary~\ref{cor:c32} covers deterministic black-box protocols (adaptive
included). A \emph{randomized} adaptive tester (transcript replay yields only a
fooling probability) and a \emph{white-box} analysis such as the SMT solver used
here are deliberately outside the statements---the latter escape being the
paper's point.

\begin{table*}[t]
	\centering
	\caption{The command behind each reported result.}
	\label{tab:repro}
	\footnotesize
	\begin{tabular}{@{}p{0.64\textwidth}p{0.32\textwidth}@{}}
		\toprule
		\textbf{Result} & \textbf{Script} \\
		\midrule
		Softmax\,+\,LayerNorm transformer, $448$ noise dimensions: certified $\le$ attack for removal and both preservation claims (Fig.~\ref{fig:bp}) & \texttt{run\_boundprop\_transformer.py}$^{\dagger}$ \\
		Threshold-gate transformer, exact over all sequences: removal $0.016$, preservation $0.013$, collateral $0.034\to0.013$ (Table~\ref{tab:results}) & \texttt{run\_transformer.py} \\
		Distributed circuit: neither the attention heads nor the MLP alone removes skill~$A$ (Fig.~\ref{fig:circuit}) & \texttt{results/rung2\_circuit\_search.py}$^{b}$ \\
		Modular-adder transformer: removal as exact independence of the summands, radius ${\ge}0.05$ (\S\ref{sec:findings-positive}) & \texttt{run\_rung3.py} \\
		Modular-adder transformer: correctness on all $625$ clean sequences, preservation radius ${\ge}0.002$ (\S\ref{sec:findings-positive}) & \texttt{run\_rung3\_preservation.py} \\
		Constructed illusion: passes a $40{,}401$-point grid with preservation proved, refuted by the solver (Fig.~\ref{fig:illusion}a,b) & \texttt{run\_illusion.py} \\
		Three naturally trained illusions in $160$ four- and five-input models (\S\ref{sec:findings-illusion}) & \texttt{run\_illusion\_nd.py} \\
		Illusions per input dimension, $80$ models each; the $d=4,5$ rows reproduce the pooled run exactly (Fig.~\ref{fig:illusion}c) & \texttt{run\_illusion\_dims.py} \\
		Certified influence exactly $0$ on the transformer (\S\ref{sec:findings-influence}) & \texttt{run\_transformer\_independence.py} \\
		Certified influence on the separated and entangled models; the illusion edit refuted with an input pair (\S\ref{sec:findings-influence}) & \texttt{run\_independence.py} \\
		Certified boundary within $0.0012$ of the model's decision rule (\S\ref{sec:findings-radius}) & \texttt{run\_margin\_curve.py} \\
		Certified edge against the model's own decision boundary (Fig.~\ref{fig:radius}) & \texttt{run\_robustness.py} \\
		Radii per edit type on the two toy subjects (Table~\ref{tab:results}, \S\ref{sec:findings-edits}) & \texttt{run\_robustness.py} \\
		Deep MLP: four-neuron layer-1 circuit and its collateral (Table~\ref{tab:results}) & \texttt{run\_deep\_multi.py} \\
		Steering dose sweep over doses $0.5$--$64$ (\S\ref{sec:findings-edits}) & \texttt{run\_collateral\_sweep.py}$^{a}$ \\
		On the transformer, no steering dose both removes $A$ and preserves $B$ (\S\ref{sec:findings-edits}) & \texttt{run\_transformer\_steering.py} \\
		SAE feature-clamp edit certified by the same prover (\S\ref{sec:findings-edits}) & \texttt{run\_sae\_clamp.py} \\
		Edit-type tables on the transformer and the modular adder (\S\ref{sec:findings-edits}) & \texttt{run\_transformer\_edit\_table.py} \newline \texttt{run\_rung3\_edit\_table.py} \\
		Exact-frontier size ladder and encoding timeouts (\S\ref{sec:findings-reach}) & \texttt{results/rung2\_size\_ladder.py}$^{b}$ \newline \texttt{results/rung2\_hull\_wall.py}$^{b}$ \\
		M0: bound propagation equals the exact radius where both apply (\S\ref{sec:method-bp}) & \texttt{run\_boundprop\_validate.py}$^{\dagger}$ \\
		Smallest token swap is ${\approx}324\times$ the certified ball (\S\ref{sec:findings-reach}) & \texttt{run\_boundprop\_quantifier.py}$^{\dagger}$ \\
		Float-gap ceiling and slack re-proofs (App.~\ref{app:floatgap}) & \texttt{run\_float\_gap.py} \\
		Machine-checked witnesses for P1--P4 (App.~\ref{app:witnesses}) & \texttt{run\_theory\_checks.py} \\
		\bottomrule
	\end{tabular}

	\parbox{\textwidth}{\footnotesize\vspace{2pt}
		Each script \texttt{run\_}$\langle$\textit{name}$\rangle$\texttt{.py} writes \texttt{results/}$\langle$\textit{name}$\rangle$\texttt{\_report.\{md,json\}}, except:
		$^{a}$ writes \texttt{results/collateral\_sweep.json};
		$^{b}$ writes the matching \texttt{results/rung2\_*.log}.
		$^{\dagger}$ requires the separate bound-propagation environment; every other script needs only \texttt{numpy} and an SMT solver.}
\end{table*}

\subsection{Proposition~\ref{prop:p4} (edit-class separation)}

\begin{proof}
\emph{(a)} Neurons outside $C$ have identical pre-activations and outputs under
$\mathrm{steer}_s$ and $\mathrm{abl}$. A neuron $j\in C$ outputs
$\relu(\mathrm{pre}_j(x)-s)$ under $\mathrm{steer}_s$ and $0$ under
$\mathrm{abl}$, where $\mathrm{pre}_j$ is the \emph{unedited} pre-activation.
Heads are affine in the neuron outputs with coefficients $W_2[h,\cdot]$, so the
head difference is exactly $\sum_{j\in C}W_2[h,j]\,\relu(\mathrm{pre}_j(x)-s)$.

\emph{(b)} $\mathrm{pre}_j$ is affine, so its maximum over the box $\D$ is at a
corner, giving $M_j(\D)$. If $s\ge M_j(\D)$ for all $j\in C$ then every leftover
term is $0$ on $\D$, so by (a) the two edited models are equal as functions;
equal functions satisfy the same claims over every region, hence share every
certificate and certified radius.

\emph{(c)} Each leftover term is nonincreasing in $s$ ($\relu$ nondecreasing, its
argument decreasing), so $\max_{x\in R}[\mathrm{abl}_A(x)+\sum_{j\in
C}W_2[A,j]\relu(\mathrm{pre}_j(x)-s)]$ is nonincreasing in $s$; the certifying
condition ``$\le0$'' therefore holds on an up-closed set $[s^*(R),\infty)$,
nonempty because at $s=\max_j M_j(\D)$ it reduces to $\mathrm{abl}$'s certificate
by (b). Enlarging $R$ can only enlarge the max, so $s^*$ grows; $R_\eps$ is
monotone in $\eps$ (Proposition~\ref{prop:p2}a). The left side is continuous in
$s$ (finite max over compact $R$ of a jointly continuous function), so the
non-strict inequality passes to the limit and $s^*(R)$ is attained.

\emph{(d)} Applying the same affine-head argument to $\mathrm{steer}_v$ and the
unedited $g$ gives
\[
	\mathrm{steer}_{v,h}(x)-g_h(x)=\sum_j W_2[h,j]\bigl[\relu(\mathrm{pre}_j(x)+v_j)-\relu(\mathrm{pre}_j(x))\bigr].
\]
At a pattern-stable $x$
with active set $\mathcal A$: a neuron active in both contributes
$\relu(\mathrm{pre}_j+v_j)-\relu(\mathrm{pre}_j)=v_j$, one inactive in both
contributes $0$, and terms with $v_j=0$ vanish; the sum reduces to
$\sum_{j\in\mathcal A}W_2[h,j]v_j$, independent of $x$. With $v=s\hat v$ this is
$s\sum_{j\in\mathcal A}W_2[B,j]\hat v_j$ for head $B$---linear in $s$. A circuit
$C$ with $W_2[B,j]=0$ leaves head $B$ unchanged at every dose; a direction heard
by head $B$ shifts its margin proportionally and breaks preservation once the
shift exceeds $B$'s minimum on the pattern-stable part of $R$.
\end{proof}

\section{Exact Encoding Details and the Float Gap}
\label{app:floatgap}
The encoding converts every weight to an exact rational and every ReLU to an
$\mathrm{ite}$ term; a claim is proved by \texttt{unsat} of its negation. The
forward-error ceiling between the ideal rational network and the executed
float64 program is $\approx10^{-13}$ for our models. The toy model's removal,
preservation and control certificates re-prove with slack $\gamma=10^{-9}$
(\texttt{prove\_forall(\dots,slack=$\gamma$)}), and every threshold-gate
transformer certificate carries slack $10^{-6}$. A
constructed zero-margin claim proved exactly yet violated by the float program by
$\sim\!10^{-16}$ on about a quarter of a region is the boundary of the
transfer argument; region endpoints are float constants too ($\sim\!10^{-17}$
from the stated rationals), immaterial at our margins.

\section{Bound-Propagation Setup}
\label{app:bp}
CROWN bounds are computed in chunks with gradients disabled during
certification for memory safety; radii are reported over balls of radius
$\ge10^{-4}$ (CROWN is degenerate at $\eps=0$). Every certified radius is
bracketed above by a projected-gradient attack, and the bound-propagation
dependencies are isolated from the pure \texttt{numpy}$+$SMT exact pipeline.

\section{Reproducibility}
\label{app:repro}
Each reported result is regenerated by one command from committed code and fixed seeds; Table~\ref{tab:repro} maps every result to its script.

\section{Machine-Checked Witnesses}
\label{app:witnesses}
P2--P4 each have a runnable check (P1 follows from the meaning of the universal quantifier): P2's query-count bound with re-proof at
the returned radius and re-refutation just past it; P3's per-grid gadget with the
survivor located in the predicted sliver $c\pm w/2$; the adaptive-tester
transcript replay for Corollary~\ref{cor:c32}; P4(a)'s leftover identity at
$2\times10^5$ random inputs and several doses, P4(b)'s corner-formula threshold,
and P4(d)'s pattern-stable collateral constant per observed activation pattern.
The margin-curve dose sweep (Table~\ref{tab:repro}) exercises P4(c). All checks pass.

\end{document}